\renewcommand{\paragraph}[1]{\noindent\textbf{#1}}
\newcommand{\lacl}{\color{brown}}
\theoremstyle{plain}
\newtheorem{theorem}{Theorem}[section]
\newtheorem{lemma}[theorem]{Lemma}
\theoremstyle{definition}
\newtheorem{definition}[theorem]{Definition}
\theoremstyle{remark}
\DeclareMathOperator*{\argmin}{arg\,min}
\DeclareMathOperator*{\arginf}{arg\,inf}
\providecommand{\cc}{\mathbf{c}}
\providecommand{\hh}{\mathbf{h}}
\renewcommand{\ss}{\mathbf{s}}
\providecommand{\uu}{\mathbf{u}}
\providecommand{\xx}{\mathbf{x}}
\providecommand{\yy}{\mathbf{y}}
\providecommand{\zz}{\mathbf{z}}
\providecommand{\cC}{\mathcal{C}}
\providecommand{\cD}{\mathcal{D}}
\providecommand{\cG}{\mathcal{G}}
\providecommand{\cI}{\mathcal{I}}
\providecommand{\cN}{\mathcal{N}}
\providecommand{\cP}{\mathcal{P}}
\providecommand{\cS}{\mathcal{S}}
\providecommand{\cX}{\mathcal{X}}
\providecommand{\cZ}{\mathcal{Z}}
\providecommand{\R}{\mathbb{R}} 
\providecommand{\mI}{\mathbf{I}}
\providecommand{\mJ}{\mathbf{J}}
\newcommand{\abs}[1]{\left\lvert#1\right\rvert}
\newcommand{\norm}[1]{\left\lVert#1\right\rVert}
\newcommand{\enc}{f_{\mathrm{enc}}}
\newcommand{\dec}{f_{\mathrm{dec}}}
\newcommand{\csrc}{\cc_{\mathrm{src}}}
\newcommand{\ssrc}{\ss_{\mathrm{src}}}
\newcommand{\hssrc}{\hat{\ss}_{\mathrm{src}}}
\newcommand{\hcsrc}{\hat{\cc}_{\mathrm{src}}}
\newcommand{\xsrc}{\xx_{\mathrm{src}}}
\newcommand{\ysrc}{\yy_{\mathrm{src}}}
\newcommand{\ctgt}{\cc_{\mathrm{tgt}}}
\newcommand{\stgt}{\ss_{\mathrm{tgt}}}
\newcommand{\xtgt}{\xx_{\mathrm{tgt}}}
\newcommand{\ytgt}{\yy_{\mathrm{tgt}}}
\newcommand{\hctgt}{\hat{\cc}_{\mathrm{tgt}}}
\newcommand{\hstgt}{\hat{\ss}_{\mathrm{tgt}}}
\newcommand{\cls}{f_{\mathrm{cls}}}
\newcommand{\hc}{\hat{\cc}}
\newcommand{\hck}[1]{\hat{\cc}_{#1}}
\newcommand{\ck}[1]{\cc_{#1}}
\newcommand{\hs}{\hat{\ss}}
\newcommand{\hcd}{\hat{\cc}_{\mathrm{d}}}
\newcommand\DoToC{%
  \startcontents
  \printcontents{}{1}{\hrulefill\vskip0pt}
  \vskip0pt \noindent\hrulefill
  }
\title{ Towards Understanding Extrapolation: a Causal Lens }
\author{\textbf{Lingjing Kong}$^{1}$\thanks{~~Equal Contribution.} \quad \textbf{Guangyi Chen}$^{1,2*}$ \quad \textbf{Petar Stojanov}$^{3}$ \quad
\textbf{Haoxuan Li}$^{2}$ \quad  \textbf{Eric P. Xing}$^{1,2}$ \quad \textbf{Kun Zhang}$^{1,2}$ \\
$^1$ Carnegie Mellon University \\
$^2$ Mohamed bin Zayed University of Artificial Intelligence \\
$^3$ Broad Institute of MIT and Harvard, Cancer Program, Eric and Wendy Schmidt Center
}
\begin{document}

\maketitle

\begin{abstract}

Canonical work handling distribution shifts typically necessitates an entire target distribution that lands inside the training distribution.
However, practical scenarios often involve only a handful of target samples, potentially lying outside the training support, which requires the capability of extrapolation.
In this work, we aim to provide a theoretical understanding of when extrapolation is possible and offer principled methods to achieve it without requiring an on-support target distribution.
To this end, we formulate the extrapolation problem with a latent-variable model that embodies the minimal change principle in causal mechanisms.
Under this formulation, we cast the extrapolation problem into a latent-variable identification problem.
We provide realistic conditions on shift properties and the estimation objectives that lead to identification even when only one off-support target sample is available, tackling the most challenging scenarios.
Our theory reveals the intricate interplay between the underlying manifold's smoothness and the shift properties.
We showcase how our theoretical results inform the design of practical adaptation algorithms. Through experiments on both synthetic and real-world data, we validate our theoretical findings and their practical implications.

\end{abstract}

\section{Introduction}

Extrapolation necessitates the capability of generalizing beyond the training distribution support, which is essential for the robust deployment of machine learning models in real-world scenarios. 
Specifically, given access to a source distribution \(\cD_{\text{src}} := p(\xsrc, \ysrc)\) with support \(\cX_{\text{src}} := \text{Supp}(p_{\text{src}} (\xx))\) and one or a few out-of-support samples \(\xtgt \notin \cX_{\text{src}}\), the goal of extrapolation is to predict the target label \(\ytgt\). 
For example, if the training distribution includes dog images, we aim to accurately classify dogs under unseen camera angles, lighting conditions, and backgrounds. While intuitive for humans, machine learning models can be brittle to minor distribution shifts~\citep{taori2020measuring, recht2019imagenet, koh2021wilds, gulrajani2020search}.

\begin{figure*}[th]
     \centering
     \begin{subfigure}[b]{0.241\textwidth}
         \centering
         \includegraphics[width=\textwidth]{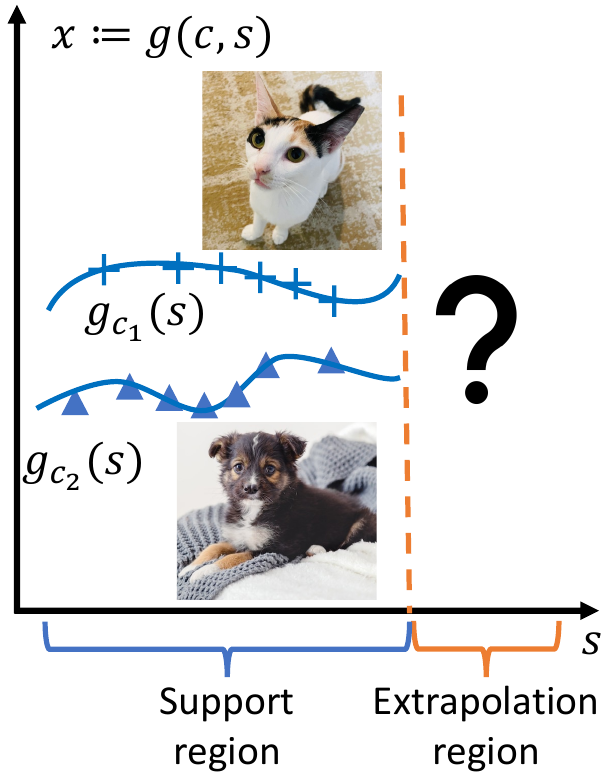}
         \caption{
            \small Extrapolation task.
         }
         \label{subfig:question}
     \end{subfigure}
     \hfill
     \begin{subfigure}[b]{0.36\textwidth}
         \centering
         \includegraphics[width=\textwidth]{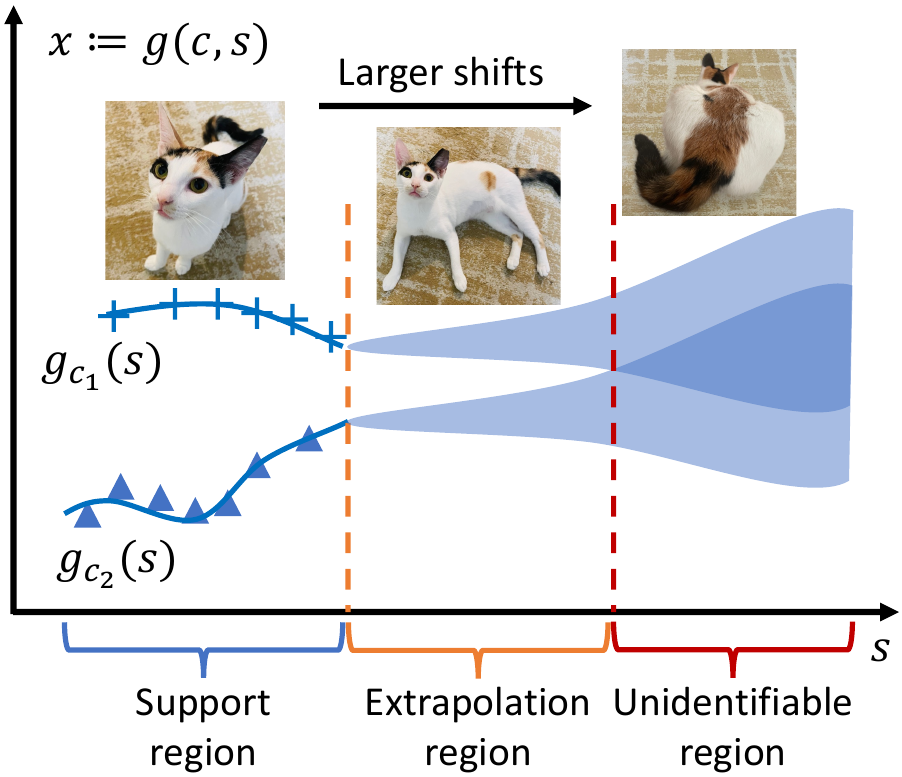}
         \caption{
            \small Dense shift (Theorem~\ref{thm:extrapolation}).
         }
          \label{subfig:global_influence}
     \end{subfigure}
     \hfill
     \begin{subfigure}[b]{0.36\textwidth}
         \centering
         \includegraphics[width=\textwidth]{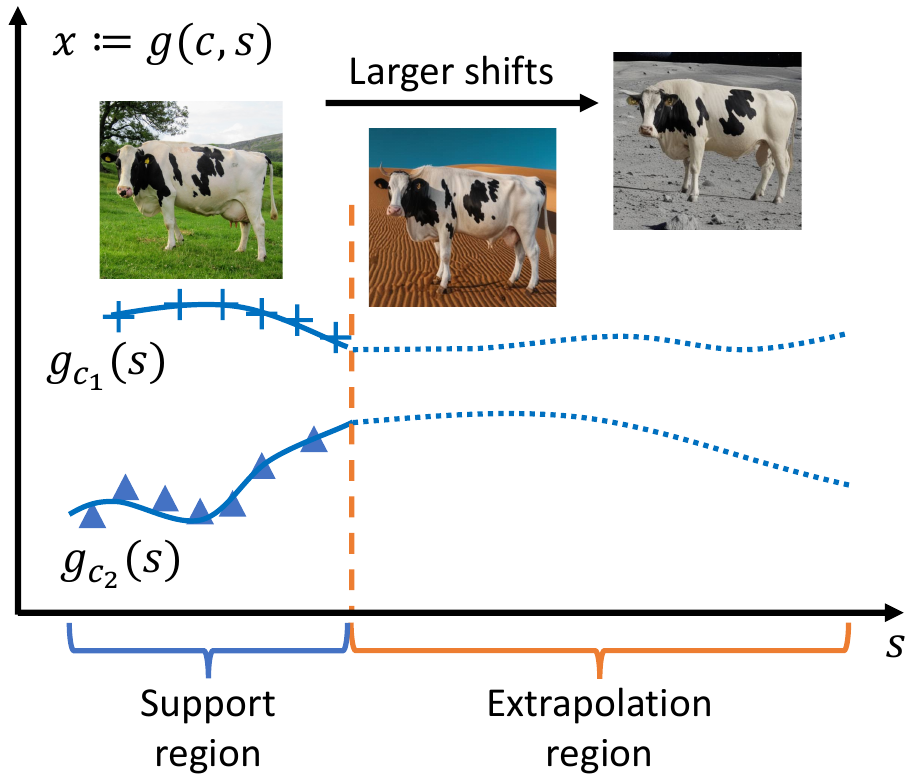}
         \caption{
            \small Sparse shift (Theorem~\ref{thm:local_extrapolation}).
         }
         \label{subfig:local_influence}
     \end{subfigure}
    \caption{
        \textbf{Illustration of extrapolation and our theoretical conditions.}
        The horizontal axis represents the changing variable $\ss$, ranging from the source support to out-of-support regions.
        The vertical axis represents the observed data $\xx$ living on the manifolds indexed by different values of the invariant variable $\cc$.
        Figure~(a) demonstrates that given a point out of support it is unclear which class manifolds it belongs to.
        Figure~(b) illustrates the dense shift condition (Theorem~\ref{thm:extrapolation}) where $\ss$ potentially changes all pixels in the images, such as the camera angle in the example.
        In this case, we can identify the invariant variable $\cc$ under a moderate amount of shift until the shift becomes excessive. For instance, the back view of the cat in the figure could be confused with other animals.
        Figure~(c) illustrates the sparse shift condition (Theorem~\ref{thm:local_extrapolation}) where $\ss$ influences a limited number of pixels, such as the background in the example.
        In contrast to the dense shift, we can identify $\cc$ under the sparse shift regardless of its severity.
        In the figure, there is no ambiguity of the class ``cow'' even though the background has changed to the moon.
    }
    \label{fig:theorem_illustration}
    \vspace{-0.6cm}
\end{figure*}

Addressing distribution shifts has garnered significant attention from the community. Unsupervised domain adaptation under covariate shifts addresses the shift of the marginal distribution \(p(\xx)\) across domains. However, canonical techniques such as importance sampling and re-weighting~\citep{shimodaira2000improving, huang2006correcting, sugiyama2008direct, zhang2013domain, stojanov2019low} are predicated on the assumptions of overlapping supports \(\text{Supp}(p_{\text{tgt}} (\xx)) \subset \text{Supp}(p_{\text{src}} (\xx))\) and the availability of the entire target marginal distribution \( p_{\text{tgt}} (\xx) \). Similarly, domain generalization~\citep{rosenfeld2021risks, albuquerque2021generalizing,blanchard2021domain} assumes access to multiple source distributions \(p_{\text{src}}(\xx, \yy)\) whose supports jointly cover the target distribution. 
In addition to these methods, test-time adaptation (TTA)~\citep{sun2020test, liu2021ttt++, wang2020tent, liang2020think} is particularly relevant to our discussion of extrapolation.
TTA addresses out-of-distribution test samples at the individual sample level. 
Canonical methods include updating the source model with entropy-based or self-supervised losses on target samples. 
However, most TTA research focuses on empirical aspects, with limited theoretical formalization~\citep{liang2023comprehensive}.
Most related to our work, \citet{kong2022partial} and \citet{li2023subspace} propose theoretical frameworks to characterize distribution shifts and explore conditions for identifying latent changing factors.
However, these frameworks assume access to multiple source distributions with overlapping supports, which are not directly applicable to the extrapolation problem considered in this work, where we have potentially only one out-of-support target sample $\xtgt$.

Since the target \(\ytgt\) can be arbitrarily out of the source support (Figure~\ref{subfig:question}), extrapolation is ill-posed without proper assumptions on the relationship between the source and the target. 
In this work, we formulate a latent-variable model that encodes a \textit{minimal change principle} to address this ill-posedness. Specifically, we assume that a latent variable \(\zz\) determines \(\xx\) such that \(\xx := g(\zz)\). 
The minimal change principle entails the following two assumptions on the generating process.
1) The out-of-support nature of \(\xtgt\) stems from only a \textit{subspace} of \(\zz\), denoted as \(\ss\), while the complement partition \(\cc\) for the target sample \(\xtgt\) is within the training support, i.e., \(\zz := [\cc, \ss]\), \( \stgt \notin \cS_{\text{src}} \), and \( \ctgt \in \cC_{\text{src}} \). 
2) The changing variable \(\ss\) only controls non-semantic attributes in \(\xx\) and thus doesn't influence the label \(\yy\), i.e., \(\yy := g_{y} (\cc)\). 
This formulation attributes the seemingly complex shifts in the pixel space of \(\xx\) to simple intrinsic changes (in the sense that this change involves only \(\ss\)) in the latent space, allowing us to reason about the transfer via the invariant variable \(\cc\). Under our formulation, extrapolation amounts to identifying invariant latent variables \(\cc\), with which a model \(f: \cc \mapsto \yy\) trained on the labeled source dataset can be directly applied to the target sample. 

In light of this formulation, we investigate the identifiability conditions of the invariant variable \(\cc\). We propose two sets of conditions addressing two regimes of the influence from \(\ss\) on \(\xx\). 
We refer to the case where all dimensions $\xx_{i}$ (e.g., all pixels in an image) could be influenced by the changing variable $\ss$ as the dense influence and the case where only a limited subset of dimensions $x_{i}$ is affected as the sparse influence. 
Specifically, our first condition (Dense Influence, Theorem~\ref{thm:extrapolation}) states that if \(\ss\)'s influence is dense, then assuming that $\cc$ takes on only finite values, extrapolation requires the manifold associated with each value of \(\cc\) (i.e., \(g(\cc, \cdot)\) over \(\ss\)) to be adequately separable, and the target changing variable \(\stgt\) to be close to the source support \(\cS_{\text{src}}\).
Intuitively, if images from two classes (two values of \(\cc\)) are sufficiently distinguishable, such as cats and dogs, we can still recognize the class of the target sample \(\xtgt\), even if it has undergone moderate unseen shifts on all pixels like camera angles and positions controlled by \(\ss\) (Figure~\ref{subfig:global_influence}). Our second condition (Sparse Influence, Theorem~\ref{thm:local_extrapolation}) states that if \(\ss\)'s influence is sufficiently sparse, then extrapolation can occur regardless of the distance of \(\stgt\) to the source support \(\cS_{\text{src}}\). Intuitively, we can recognize the class "cow" even if only the background is changed, regardless of the severity (Figure~\ref{subfig:local_influence}).

Our theory provides insights into the interaction between the underlying manifold’s smoothness, the out-of-support distance, and the nature of the shift. We conduct synthetic data experiments to validate our theoretical results. Moreover, we discuss the relationship between our results and TTA approaches.
In particular, we apply our theoretical insights to improve autoencoder-based MAE-TTT~\citep{gandelsman2022test} and observe noticeable improvements. Additionally, we demonstrate that basic principles (sparsity constraints) from our framework can benefit the state-of-the-art TTA approach TeSLA~\citep{tomar2023tesla}.
Our empirical results not only show the practical viability of our theory but also pave the way for further advancements in the field.

In summary, our contributions are threefold:
\begin{itemize}[leftmargin=1em, topsep=1pt]
    \item We formulate the extrapolation task as a latent-variable identification problem. Our latent-variable model encodes complex changes in observed variables to a partition of latent variables, allowing us to reason about transferability from the source to the target through latent variable identification.
    \item We provide identification guarantees for the proposed latent-variable model, including shifts of distinct properties (dense vs. sparse) and corresponding conditions on the generating process. Our theory provides an essential understanding of when latent-variable identification is possible without accessing an entire target distribution and assuming overlapping supports as in prior work~\citep{kong2022partial, li2023subspace}.
    \item Inspired by our theory, we propose to add a likelihood maximization term to autoencoder-based MAE-TTT~\citep{gandelsman2022test} to facilitate the alignment between the target sample and the source distribution. In addition, we propose sparsity constraints to enhance the state-of-the-art TTA approach TeSLA~\citep{tomar2023tesla}. We validate our proposals with empirical evidence.
\end{itemize}

\section{Related Work}
\label{sec:related_work}

\paragraph{Extrapolation.}
Out-of-distribution generalization has attracted significant attention in recent years.
Unlike our work, the bulk of the work is devoted to generalizing to target distributions on the same support as the source distribution~\citep{ben2010theory,sugiyama2007covariate,zhang2013domain}.
Recent work~\citep{lachapelle2023additive,wiedemer2023compositional,wiedemer2023provable,zhao2022compositional} investigates extrapolation in the form of compositional generalization by resorting to structured generating functions (e.g., additive, slot-wise).
Another line of work~\citep{netanyahu2023learning,shen2023engression,dong2022steps} studies extrapolation in regression problems and does not consider the latent representation.
\citet{saengkyongam2023identifying} leverage a latent variable model and linear relations between the interventional variable and the latent variable to handle extrapolation.
In this work, we formulate extrapolation as a latent variable identification problem.
Unlike the semi-parametric conditions in prior work, our conditions do not constrain the form of the generating function and are more compatible with deep learning models and tasks.

\paragraph{Latent-variable identification for transfer learning.}
In the latent-variable model literature, one often assumes latent variables $\zz$ generate the observed data $\xx$ (e.g., images, text) through a generating function.
However, the nonlinearity of deep learning models requires the generating function to be nonlinear, which has posed major technical difficulty in recovering the original latent variable~\citep{hyvarinen2000independent}.
To overcome this setback, a line of work~\citep{khemakhem2020variational,khemakhem2020icebeem,hyvarinen2016unsupervised,hyvarinen2019nonlinear} assumes the availability of an auxiliary label $\uu$ for each sample $\xx$ and under different $\uu$ values, each component $z_{i}$ of $\zz$ experiences sufficiently large shift in its distribution.
Since this framework assumes all latent components' distributions vary over distributions indexed by $\uu$, it does not assume the existence of some shared, invariant information across distributions, which is often the case for transfer learning tasks.
To address this issue, recent work~\citep{kong2022partial,li2023subspace} introduce a partition of $\zz$ into an invariable variable $\cc$ and an changing variable $\ss$ (i.e., $\zz:= [\cc, \ss]$) such that $ \cc $'s distribution remains constant over distributions.
They show both $\cc$ and $\ss$ can be identified and one can directly utilize the invariant variable $\cc$ for domain adaptation.  
However, their techniques crucially rely on the variability of the changing variable $\ss$, mandating the availability of multiple sufficiently disparate distributions (including the target) and their overlapping supports.
These constraints make them unsuitable for the extrapolation problem.
In comparison, our theoretical results give identification of the invariant variable $\cc$ (the on-support variable in the extrapolation context) with only one source distribution $ p_{\mathrm{src}} (\xx) $ and as few as one out-off support target sample $\xtgt$ through mild assumptions on the generating function, directly tackling the extrapolation problem.

Please refer to Section~\ref{app:related_work} for more related work.

\section{Extrapolation and Latent-Variable Identification} \label{sec:formulation}

Given the labeled source distribution $ p(\xsrc, \ysrc) $, our goal is to make predictions on a target sample $\xtgt$ outside the source support ($\xtgt \not\in \cX_{\text{src}} $). While more target samples would provide better information about the distribution shift, in practice, we often have only a handful of samples to work with. Therefore, we focus on the challenging scenario where only one target sample $\xtgt$ is available.

Making reliable predictions on out-of-support samples $\xtgt$ is infeasible without additional structure. 
Real-world problems where humans successfully extrapolate often follow a minimal change principle: they involve sparse, non-semantic intrinsic shifts despite complex raw data changes. 
For example, a person who has only seen a cow on a pasture can recognize the same cow on a beach, even if the background pixels change significantly.
Here, the cow corresponds to the part of the latent variable that remains within the support of the source data, which we call the invariant variable $\cc$ ($\ctgt \in \cC_{\text{src}} $), while the background change corresponds to the complement that drifts off the source support, which we call the changing variable $\ss$ ($\stgt \notin \cS_{\text{src}} $).
Clearly, extrapolation is impossible if the intrinsic shift is dense (i.e., all dimensions change, $\zz = \ss$) or semantic (i.e., $\yy$ is a function of $\ss$). For instance, if the variable $\ss$ also alters the cow's appearance drastically, making it unrecognizable, extrapolation fails. We define the data-generating process to encapsulate this minimal change principle, as follows:

        



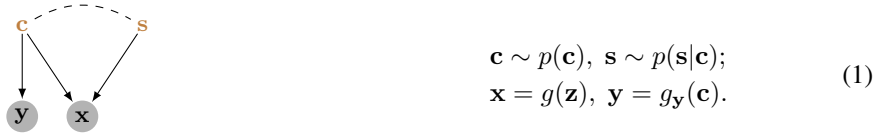
\begin{figure}[h!]
    \centering
     \vspace{-0.1cm}
    \begin{minipage}{0.5\textwidth}
        \centering
        \begin{tikzpicture}[scale=.8, line width=0.4pt, inner sep=0.2mm, shorten >=.1pt, shorten <=.1pt]

        \draw (0,0) node[circle, fill=gray!60, thick, inner sep=0pt, minimum size=12pt](x)  {{\footnotesize\,$\xx$\,}};
        \draw (-1,0) node[circle, fill=gray!60, thick, inner sep=0pt, minimum size=12pt](y)  {{\footnotesize\,$\yy$\,}};
        \draw (-1, 1.5) node(c)  {{\footnotesize\lacl\,$\cc$\,}};
        \draw (1, 1.5) node(s)  {{\footnotesize\lacl\,$\ss$\,}};
        
       \draw[-latex] (c) -- (x);
       \draw[-latex] (s) -- (x);
        \draw[-latex] (c) -- (y);

        \draw[dashed] (c) to[bend left] (s);

    \end{tikzpicture}
    \label{fig:causal_model_simple}
    \end{minipage}%
    \begin{minipage}{0.5\textwidth}
        \centering
        \begin{equation}
           \begin{aligned} \label{eq:data_generating}
                & \cc \sim p (\cc), \; \ss \sim p (\ss | \cc);\\
                & \xx = g (\zz), \; \yy = g_{\yy} (\cc).
            \end{aligned}
        \end{equation}
    \end{minipage}
     \caption{
     \small
     \textbf{The data-generating process.} The invariant latent variable \(c\) and the changing latent variable \(s\) jointly generate the observed variable \(x\). 
     The dashed line indicates potential statistical dependence.
     }
     \vspace{-0.2cm}
\end{figure}
In this process, the latent space $\zz \in \cZ \subset \R^{d_{\zz}}$ comprises two subspaces: the invariant variable $ \cc \in \cC \subset \R^{d_{\cc}} $ and the changing variable $ \ss \in \cS \subset \R^{d_{\ss}} $.
We define $ \cZ:= \cZ_{\mathrm{src}} \cup \{ \zz_{\mathrm{tgt}} \}$ as the source support augmented with the target sample $ \zz_{\mathrm{tgt}} $ and similarly $ \cX $ and $\cS$.
The invariant variable $\cc$ encodes shared information between the source distribution $ p(\xsrc) $ and the out-of-support target sample $\xtgt$, while the changing variable $\ss$ describes the shift from the source support $\cX_{\mathrm{src}}$. Hence, $ \ctgt \in \cC_{\text{src}} $ and $ \stgt \notin \cS_{\text{src}} $. The variables $\zz := [\cc, \ss]$ jointly generate the observed variable $\xx \in \cX \subset \R^{d_{\xx}}$ through an invertible generating function $g: \R^{d_{\zz}} \to \R^{d_{\xx}}$. 
Furthermore, we assume that the label $\yy$ originates from the invariant variable $\cc$. 
This assumption reflects the reality that factors such as camera angles and lighting do not affect the object's class in an image.

Our latent-variable model adheres to the minimal change principle in two key ways: (1) the target sample's out-of-support nature arises from only a subset of latent variables $\ss$, and (2) these changing variables $\ss$ are non-semantic, thus not influencing the label $\yy$.

\paragraph{Extrapolation and identifiability.}
Under this framework, extrapolation is possible if we can identify the true invariant variable $\cc$ in both the source distribution $p_{\mathrm{src}}(\xx)$ and the target data $\xtgt$. This allows us to learn a classifier $\cls: \cc \mapsto \yy$ on the labeled source distribution $p_{\mathrm{src}} (\xx, \yy)$. Since the target sample's invariant variable falls within the source support ($\ctgt \in \cC_{\mathrm{src}}$), this classifier $\cls$ can be directly applied to the target sample $\ctgt$. Thus, the task of extrapolation reduces to identifying the invariant variable $\cc$ in both the source distribution $p(\xsrc)$ and the target sample $\xtgt$. In Section~\ref{sec:theory}, we explore the conditions for identifying the invariant variable $\cc$.

Given the above reasoning, we define identifiability in Definition~\ref{def:identifiability} (i.e., block-wise identifiability~\citep{vonkugelgen2021selfsupervised,lachapelle2023additive}) which suffices for extrapolation.
\begin{definition}[Identifiability of the Invariant Variable $\cc$] \label{def:identifiability}
    For any $\xx_{1}$ and $\xx_{2}$, their true invariant variables $\cc_{1}$, $\cc_{2}$ are equal if and only if the estimates $ \hat{\cc}_{1} $, $\hat{\cc}_{2}$ are equal: $ \cc_{1} = \cc_{2} \iff \hat{\cc}_{1} = \hat{\cc}_{2}$. 
\end{definition}

\section{Identification Guarantees for Extrapolation} \label{sec:theory}


In this section, we provide two sets of conditions on which one can identify the invariant variable $\cc$ and discuss the intuition and implications.

As discussed in Section~\ref{sec:formulation}, we need to identify the target sample $ \xtgt $ with source samples $ \xsrc $ that share the same invariant variable values with the target sample, i.e., $ \csrc = \ctgt $.
This enables us to obtain the label of $\xtgt$ by assigning the label of such $\xsrc$.
The shift between the source distribution $p(\xsrc)$ and the target sample $\xtgt$ originates from the out-of-support nature of the changing variable $\stgt$, i.e., $\stgt \notin \cS_{ \mathrm{src} }$, it is crucial to impose proper assumptions on the influence of $\ss$ on $\xx$ so that $\xx$ retains sufficient footprints of $\cc$ for identification beyond the source support.

We denote the Jacobian matrix of the generating function $g$ as $\mJ_{g}( \zz )$ and $\xx$'s dimensions under the influence of $\ss$ as $\cI_{\ss} (\zz) := \{ i \in [d_{\xx}]: \exists j \in \{d_{\cc} +1 , \dots, d_{\zz} \}, \text{s.t.}, \, [\mJ_{g}( \zz )]_{i, j} \neq 0 \}$.
We note that the set $ \cI_{\ss} (\zz) $ is a function of $\zz$, since the influenced dimensions may vary over $\zz$.
Intuitively, if $\ss$ influences $\xx$ in a dense manner, i.e., large $ \abs{ \cI_{\ss} (\zz) } $ for potentially all dimensions $x_{i}$, there may not be dimensions of $\xx$ serving as clear signatures of $\cc$, thereby increasing the difficulty of identify $\cc$.
Additionally, the degree to which the changing variable $\ss$ is out-of-support plays a critical role -- the further the target changing variable-- the further the target changing variable $\stgt$ deviates from the source distribution support $ \cS_{\mathrm{src}} $, the more severe and unpredictable the shift becomes, making it harder to retrieve $\cc$.
In the following, we formalize conditions on the influence of $\ss$ from these two perspectives, revealing an interesting trade-off and interaction between these factors.

\paragraph{Notations.}
The true generating process involves $\cc$, $\ss$, distributions $p$, and $g$ (Equation~\ref{eq:data_generating}), we define their statistical estimates with $\hat{\cc}$, $\hat{\ss}$, $\hat{p}$, and $\hat{g}$ through the objectives we will introduce.\footnote{
    We slightly abuse the notation $p$ to denote density functions for continuous variables or delta functions for discrete variables.
}
We assume that the estimation process respects the conditions of the corresponding true-generating process.

\subsection{Dense-shift Conditions} \label{subsec:dense_theory}

We begin by investigating scenarios where there are no constraints on the number of dimensions of \(\xx\) (i.e., the number of pixels) influenced by the changing variable \(\ss\), i.e., potentially large \( \abs{ \cI_{\ss} (\zz) } \), which we term as dense shifts. 
For images, these shifts encompass global transformations such as changes in camera angles and lighting conditions that could potentially affect all pixel values (Figure~\ref{subfig:global_influence}).

\paragraph{Understanding the problem.}
As dense shifts could influence all the dimensions of $\xx$, every dimension could be out of the source support and there might not be dimensions of $\xx$ that solely contain the information of $\cc$.
Consequently, relying on any subset of \(\xx\) dimensions to infer the original \(\cc\) becomes untenable. 
For instance, consider a scenario where the source distribution contains frontal-view images of a cat, while the target sample portrays the same cat from a side view (Figure~\ref{subfig:global_influence}). 
The model cannot recognize these two images as the same cat (the same $\cc$) by matching a specific part of the side view, say the cat's nose, to samples in the source distribution because this cat's nose only shows up as a front view and can be vastly different in terms of the pixel region and values.
The model cannot match specific features such as the cat's nose, between the side-view target and the source distribution, as the pixel region and values for the nose drastically differ.

\paragraph{Our approach.}
For the reasons above, we need to constrain such dense changes so that even when all dimensions are affected, the target sample adheres to some intrinsic structure determined by the underlying \(\ctgt\) and remains distinguishable from samples of \(\cc \neq \ctgt\)
In many real-world distributions, we can interpret \(\cc\) as the embedding vector of classes or other categories, with each \(\cc\) value indexing a manifold \(g(\cc, \cdot)\) over $\ss$.
If manifolds are smooth and sufficiently separable from each other, they should exhibit limited variations in the adjacent region to the training support, avoiding confusion between distinct categories.
For example, there exists a noticeable distinction between cats and lions, such that moderate illumination changes would not cause confusion until illumination significantly obscures distinguishing features.
In the following, we formalize these structures by assuming a finite cardinality of $\cc$ and constraining the distance of $\stgt$ to the support $\cS_{\mathrm{src}} $.

\paragraph{Additional notations.}
We denote with $ J_{u} $ an upper bound of the Jacobian spectrum norm: $ \norm{ \mJ_{g} (\zz) } \leq J_{u} $ on the support.
In Appendix~\ref{app:extrapolation_proof}, we show  $J_{u} < \infty$ due to Assumption~\ref{asmp:discrete_identification}-\ref{asmp:invertibility_extrapolation} and Assumption~\ref{asmp:discrete_identification}-\ref{asmp:compactness}.
We denote with $ D( \cc_{1}, \cc_{2} ) $ the $\ell_{2}$ distance between two manifolds on the support boundary: $ D( \cc_{1}, \cc_{2} ):= \inf_{\ss_{1}, \ss_{2} \in \text{Bd} (\cS_{\mathrm{src}} ) } \norm{ g(\cc_{1}, \ss_{1}) - g(\cc_{2},\ss_{2}) }$, where we denote the boundary of source support with $\text{Bd} (\cS_{\text{src}})$. 
We denote with $D( \ss, \cS_{\mathrm{src}} )$ the minimal $\ell_{2}$ distance between $\ss$ and the source support $ \cS_{\mathrm{src}}$, i.e., $ D( \ss, \cS_{\mathrm{src}} ):= \inf_{ \ssrc \in \cS_{\mathrm{src}} } \norm{ \ss - \ssrc }$.

\begin{restatable}[Identification Conditions under Global Shifts]{assumption}{extrapolationassumption} \label{asmp:discrete_identification} {\ } 
    \begin{enumerate}[label=\roman*,leftmargin=2em]
        \item \label{asmp:invertibility_extrapolation} [Smoothness \& Invertibility]:
        The generating function $g$ in Equation~\ref{eq:data_generating} is a smooth invertible function with a smooth inverse everywhere.
        

        \item \label{asmp:compactness} [Compactness]:
        The source data space $\cX_{\mathrm{src}} \subset \R^{d_{x}} $ is closed and bounded.

        \item \label{asmp:discreteness_extrapolation} [Discreteness]:
        The invariant variable $\cc$ takes on values from a finite set: $  \cC = \{ \cc_{k} \}_{k\in[K]} $.

        \item \label{asmp:continuous_pdf} [Continuity]:
        The probability density function $ p(\ss | \cc) $ is continuous over $\ss \in \cS_{\mathrm{src}}$, for all $\cc \in \cC$.  

        \item \label{asmp:out_of_support_distance} [Out-of-support Distance]:
        The target sample's out-support components $\stgt$'s distance from the source support $ \cS_{\text{src}} $ is constrained: $ \inf_{\ss \in \cS_{\text{src}}} \norm{ \stgt - \ss } \leq \frac{ \min_{ \cc \in \cC \setminus \{ \ctgt \} } D( \ctgt, \cc ) }{ 2 J_{u} } $. 
    \end{enumerate}
\end{restatable}

\paragraph{Discussion on the conditions.}
As discussed above, the main conditions revolve two key factors: the discrete structure of the invariant distribution of $p(\cc)$ in Assumption~\ref{asmp:discrete_identification}-\ref{asmp:discreteness_extrapolation} and the off-support distance of the changing variable $\ss$ in Assumption~\ref{asmp:discrete_identification}-\ref{asmp:out_of_support_distance}.
The discrete structure of $p(\cc)$ is applicable to many real-world scenarios, especially classification tasks where the semantic invariant information often manifests as discrete class labels or other categorical distinctions.
While this assumption is typically valid, it can be extended to encompass continuous dimensions in the invariant variable $\cc:=[ \cc_{\text{c}}, \cc_{\text{d}} ]$ where $ \cc_{\text{c}} $ and $ \cc_{\text{d}} $ stand for the continuous and discrete dimensions, respectively.
In such cases, we can group the continuous dimensions $ \cc_{\text{c}} $ with the changing variable $\ss$ and the same proof would give rise to the identification of the discrete part $ \cc_{\text{d}} $, which suffices for classification tasks.
The off-support distance condition involves the smoothness of the generating function $ g $, where a smoother generating function allows more leeway for the target changing variable $\stgt$ to deviate.
When $\ss$ controls the camera angle, one may be able to recognize a slightly sided view of cats after seeing front views in the source until the $\ss$ deviates too far and all images become back views, potentially leading to confusion with other animals (Figure~\ref{subfig:global_influence}).
Assumption~\ref{asmp:discrete_identification}-\ref{asmp:invertibility_extrapolation} ensures that the generating process preserves the latent information, which is widely adopted in the literature~\citep{kong2022partial,li2023subspace,hyvarinen2016unsupervised,hyvarinen2019nonlinear,khemakhem2020variational,kong2024learning}.
Specifically, this guarantees that manifolds indexed by distinct values of $\cc$ are separate from each other, maintaining strictly positive distances between them.
Assumption~\ref{asmp:discrete_identification}-\ref{asmp:compactness},\ref{asmp:continuous_pdf} are technical conditions mirroring realities that pixels values are bounded and the changing variable $\ss$ often represent attributes that vary gradually across its support (e.g., lighting and angles).  

\begin{restatable}[Extrapolation under Dense Shifts]{theorem}{extrapolationtheorem} \label{thm:extrapolation}
    Assuming a generating process in Equation~\ref{eq:data_generating},
    we estimate the distribution with model $ (\hat{g}, \hat{p}(\hat{\cc}), \hat{p}(\hat{\ss}))$ with the objective:
    \begin{align} \label{eq:extrapolation_objective}
        \qquad\sup \hat{p}( \hctgt ), \quad \text{Subject to: } \hat{p}(\xx) = p(\xx), \, \forall \xx \in \cX_{\mathrm{src}}; \quad \hstgt \in \arginf_{\hs} D( \hs, \hat{\cS}_{\mathrm{src}} ).
    \end{align}
    Under Assumption~\ref{asmp:discrete_identification}, the estimated model can attain the identifiability in Definition~\ref{def:identifiability}.
\end{restatable}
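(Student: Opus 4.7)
The plan splits into a source-side identification argument and a target-side assignment argument, with the latter doing most of the work.

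The first stage shows that matching the source distribution already pins down a bijection between true and estimated classes. Under Assumption~\ref{asmp:discrete_identification}-\ref{asmp:invertibility_extrapolation} and Assumption~\ref{asmp:discrete_identification}-\ref{asmp:discreteness_extrapolation}, the true support decomposes as a disjoint union $\cX_{\mathrm{src}} = \bigsqcup_{k=1}^{K} \cM_k$ of $K$ smooth connected submanifolds $\cM_k := g(\cc_k, \cS_{\mathrm{src}})$, with disjointness coming from invertibility of $g$. Since the estimator respects the generative conditions, $\hat{\cX}_{\mathrm{src}}$ decomposes analogously into $\{\hat{\cM}_k\}_{k=1}^{K}$, and the source-matching constraint $\hat{p}(\xx) = p(\xx)$ on $\cX_{\mathrm{src}}$ forces the two supports to coincide as subsets of $\R^{d_{\xx}}$. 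Uniqueness of the connected-component decomposition then yields a bijection $\pi$ on $\{1,\ldots,K\}$ with $\hat{\cM}_{\pi(k)} = \cM_k$, which establishes Definition~\ref{def:identifiability} restricted to the source.

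The second stage transports the off-support distance bound into $\xx$-space. Using $\norm{g(\ctgt, \stgt) - g(\ctgt, \ssrc)} \leq J_u \cdot \norm{\stgt - \ssrc}$ and Assumption~\ref{asmp:discrete_identification}-\ref{asmp:out_of_support_distance}, I would first show $D(\xtgt, \cM_{\ctgt}) \leq \tfrac{1}{2}\min_{\cc \neq \ctgt} D(\ctgt, \cc)$; a triangle-inequality argument chained through the closest boundary point of $\cS_{\mathrm{src}}$ to $\stgt$ and using the definition of $D(\cc_1, \cc_2)$ then gives $D(\xtgt, \cM_{\cc'}) \geq \tfrac{1}{2} D(\ctgt, \cc')$ for every $\cc' \neq \ctgt$. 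Hence $\xtgt$ lies strictly closer to its own manifold $\cM_{\ctgt}$ than to any competitor, with a strictly positive margin. Combined with the source bijection, if the estimator assigned $\hctgt = \hat{\cc}_\ell$ with $\ell \neq \pi(\ctgt)$, then $\xtgt$ would sit on the off-support extension of the wrong manifold $\hat{\cM}_\ell = \cM_{\pi^{-1}(\ell)}$ at positive $\xx$-space distance, and smoothness of $\hat{g}^{-1}$ near $\text{Bd}(\hat{\cS}_{\mathrm{src}})$ would force $D(\hstgt, \hat{\cS}_{\mathrm{src}})$ to exceed what the correct assignment $\hctgt = \hat{\cc}_{\pi(\ctgt)}$ achieves, where the extension along $\cM_{\ctgt}$ realizes a distance of the same order as $D(\stgt, \cS_{\mathrm{src}})$. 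This contradicts the $\arginf$ in the objective, and the $\sup \hat{p}(\hctgt)$ term resolves any residual ambiguity by selecting the model consistent with $\pi$.

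The main obstacle I anticipate is controlling the estimator $\hat{g}$ off the estimated source support: the Jacobian bound $J_u$ applies to the true $g$ on the compact source by Assumption~\ref{asmp:discrete_identification}-\ref{asmp:compactness}, yet the $\arginf$ compares distances in $\hat{\ss}$-space produced by $\hat{g}^{-1}$ on the off-support extension. I would address this through the standing assumption that the estimator inherits the generative conditions (smooth invertibility with a smooth inverse everywhere), which yields local Lipschitz bounds for $\hat{g}^{-1}$ in a neighborhood of $\text{Bd}(\hat{\cS}_{\mathrm{src}})$. A cleaner alternative I would also pursue is to conjugate the whole comparison by the source-side bijection $\pi$: on $\cS_{\mathrm{src}}$ the two parameterizations differ only by a smooth reparametrization, so the $\arginf$ in $\hat{\ss}$-space reduces to an $\arginf$ in the true $\ss$-space, where Assumption~\ref{asmp:discrete_identification}-\ref{asmp:out_of_support_distance} applies directly.
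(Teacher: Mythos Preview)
Your two-stage structure matches the paper's, but the target-side execution has a genuine gap and misses the key simplification the paper uses.

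\textbf{The gap in your $\xx$-space triangle inequality.} You claim that $D(\xtgt,\cM_{\cc'}) \ge \tfrac{1}{2}D(\ctgt,\cc')$ follows by chaining a triangle inequality through the closest boundary point $g(\ctgt,\ssrc^{*})$. Writing this out gives
\[
\norm{\xtgt - g(\cc',\ss')} \;\ge\; \norm{g(\ctgt,\ssrc^{*}) - g(\cc',\ss')} - J_u\,\norm{\stgt-\ssrc^{*}},
\]
and the second term is indeed at most $\tfrac{1}{2}D(\ctgt,\cc')$ by Assumption~\ref{asmp:discrete_identification}-\ref{asmp:out_of_support_distance}. But the first term is the distance from a boundary point of $\cM_{\ctgt}$ to an \emph{arbitrary} point of $\cM_{\cc'}$, whereas $D(\ctgt,\cc')$ is defined as an infimum over $\ss_1,\ss_2\in\text{Bd}(\cS_{\mathrm{src}})$ only. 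Nothing in the assumptions lower-bounds boundary-to-interior distances by $D(\ctgt,\cc')$, so the chain does not close. The paper avoids this by never comparing $\xx$-space distances to the source manifolds; instead it writes $\xtgt = g(\ctgt,\stgt) = g'(\cc',\stgt')$ for any admissible $g'$ agreeing with $g$ on $\cX_{\mathrm{src}}$, applies the integral mean-value theorem to both representations, subtracts, and uses that the two nearest on-support points $\ssrc,\ssrc'$ both lie on $\text{Bd}(\cS_{\mathrm{src}})$. This produces the symmetric bound $J_u(\norm{\hh}+\norm{\hh'})\ge D(\ctgt,\cc')$, hence $\max\{\norm{\hh},\norm{\hh'}\}\ge D(\ctgt,\cc')/(2J_u)$, entirely in $\ss$-space and with both endpoints on the boundary so that the definition of $D$ applies directly.

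\textbf{The obstacle you flag is dissolved by feasibility, not by Lipschitz control of $\hat{g}^{-1}$.} Your proposed fixes for passing from $\xx$-space to $\hat{\ss}$-space (local Lipschitz bounds on $\hat{g}^{-1}$, or conjugating by $\pi$) would at best give qualitative control, not the quantitative threshold $D(\ctgt,\cc')/(2J_u)$. The paper's resolution is simpler: the true model $\hat{g}=g$ is a feasible choice in the objective~\eqref{eq:extrapolation_objective}, and with this choice the correct assignment $\hctgt=\hat{\cc}_{\pi(\ctgt)}$ achieves off-support distance $\norm{\hh}=D(\stgt,\cS_{\mathrm{src}})$ strictly below the threshold. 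The symmetric inequality above, rerun with hats (the estimator is assumed to satisfy the same generative conditions, so $\hat{J}_u$ and $\hat{D}$ are available), then forces any wrong assignment $\hat{\cc}_{k'}$ to incur $\norm{\hat{\hh}'}\ge D(\hat{\cc}_k,\hat{\cc}_{k'})/(2\hat{J}_u)$, which the $\arginf$ rules out. So the comparison is done entirely in $\ss$-/$\hat{\ss}$-space by exploiting that the truth sits in the hypothesis class; no off-support inversion bounds are needed.
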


\paragraph{Proof sketch.}
We estimate the generative process through maximum likelihood estimation on the source distribution $ \hat{p}(\hat{\xx}) = p(\xx) $.
Under Assumption~\ref{asmp:discrete_identification}-\ref{asmp:invertibility_extrapolation},\ref{asmp:compactness},\ref{asmp:discreteness_extrapolation},\ref{asmp:continuous_pdf}, we can establish the identification of $\cc$ on the \textit{support}~\citep{kong2024learning}, i.e., $ \hc_{1} = \hc_{2} \iff \cc_{1} = \cc_{2}$, for $\xx_{1}, \xx_{2} \in \cX_{\mathrm{src}}$.
This implies that all samples $\xsrc$ on a given source manifold $g(\cc, \cdot)$ share identical values of $ \hc $.
In the objective, we maximize the likelihood $ \hat{p}(\hctgt) $ to drive $ \hctgt $ to match one of the discrete values of $ \hc $ with nontrivial probability mass.
Given the identification of $\cc$ on the support, this equates to assigning $ \xtgt $ to a manifold $ g(\cc^{*}, \cdot) $ with $\cc^{*} \in \cC$.
Our task now switches to ensuring that this is the correct manifold for $\xtgt$, i.e., $\cc^{*} = \ctgt$.
To accomplish this, we select the estimated model that uses the minimal off-support distance on $\hs$ (i.e., $ D(\hs, \hat{\cS}_{\mathrm{src}} ) $) to explain the off-support nature of $\xtgt$. This also embodies the minimal change principle.
This and Assumption~\ref{asmp:discrete_identification}-\ref{asmp:out_of_support_distance} guarantee that only the correct manifold ($g(\ctgt, \cdot)$) can effective capture $\xtgt$, thereby facilitating the desired identification.
In practice, these constraints can be implemented through Lagrange multipliers. Full proof is given in Appendix~\ref{app:extrapolation_proof}.

\subsection{Sparse-shift Conditions} \label{subsec:sparse_theory}

We now examine cases where the changing variable $\ss$ influences only a subset of dimensions of $\xx$, i.e., a limited \( \abs{ \cI_{\ss} (\zz) } \), which we refer to as sparse shifts.
For image distributions, these shifts include local corruptions or background changes that do not alter foreground objects (Figure~\ref{subfig:local_influence}).

\paragraph{Additional notations.}
We define the index set $ \cI_{\cc} (\zz) $ under the influence of $\cc$ and the indices under the the exclusive influence of $\cc$ as $ \cI_{\cc \setminus \ss} (\zz) := \cI_{\cc} (\zz) \setminus \cI_{\ss} (\zz) $.

\paragraph{Understanding the problem.}
In contrast to the dense-shift scenario, here we have a non-trivial subset of dimensions $ [\xx]_{\cI_{\cc \setminus \ss} (\zz)} $ that are unaffected by the changing variable $\ss$.
Consequently, if these dimensions carry sufficient information about $\cc$, we can exploit them to directly recover the true $\cc$, regardless of the distance $ [\xx]_{\cI_{\ss} (\zz)} $ deviates from the support. 
In contrast, in the dense-shift scenario, we need to constrain the out-of-support distance of $\ss$ and assume the discreteness of $\cc$.
Consider a scenario where a fixed $\cc$ represents a specific cow and $\ss$ controls only the background. 
Despite the variation in the target background (e.g., desert or space), we can effectively match the cow in the target image to the correct source images (see Figure~\ref{subfig:local_influence}).
While this may seem intuitive for humans, it is nontrivial for machine learning models to automatically recognize the region $ [\xx]_{\cI_{\cc \setminus \ss} (\zz)} $, especially given its potential variation across $\zz$.

\paragraph{Our approach.}
For image classification, $[\xx]_{\cI_{\cc \setminus \ss} (\zz)} $ corresponds to foreground objects (or a portion) unaffected under sparse changes induced by $\ss$ (e.g., background changes).
Humans can recognize this region because the pixels within it are strongly correlated (e.g., cow features).
This observation motivates us to formalize such dependence structures in natural data to enable automatic identification.


\begin{restatable}[Identification Conditions under Local Shifts]{assumption}{localextrapolationassumption} \label{asmp:local_identification} {\ } 
    \begin{enumerate}[label=\roman*,leftmargin=2em]
        \item \label{asmp:invertibility_local_extrapolation} [Smoothness \& Invertibility]:
        The generating function $g$ in Equation~\ref{eq:data_generating} is invertible and differentiable, and its inverse is also differentiable.
        
        \item \label{asmp:source_region_invertibility} [Invariant Variable Informativeness]:
        The dimensions under $\cc$'s exclusive influence is uniquely determined: for a fixed $\cc \in \cC$, $ [\xx]_{ \cI_{\cc \setminus \ss} (\cc, \ss_{1}) } \neq [\xx]_{ \cI_{\cc \setminus \ss} (\cc^{*}, \ss_{2}) } $ for any $ \cc^{*} \neq \cc $, $ \ss_{1} \in \cS $, and $ \ss_{2} \in \cS$.

        \item \label{asmp:change_dimensions} [Sparse Influence]: 
        At any $\zz \in \cZ$, the changing variable $\ss$ influences at most $ d_{\ss} $ dimensions of $\xx$, i.e., $ \abs{ \cI_{\ss} (\zz) } \leq d_{\ss} $. Alternatively, the two variables $\cc$ and $\ss$ do not intersect on their influenced dimensions $ \cI_{\cc} (\zz) \cap \cI_{\ss} (\zz) = \emptyset $.

        \item \label{asmp:irreducibility}[Mechanistic Dependence]:
        For all $\zz $, any nontrivial partition $ \cP_{1}, \cP_{2} $ of the dimensions $\cI_{\cc \setminus \ss} (\zz) $ yields dependence between the sub-matrices of the Jacobian $ \mJ_{g} (\zz) $: $ \text{rank} ( [\mJ_{g} (\zz)]_{ \cI_{\cc \setminus \ss} } (\zz)  ) < \text{rank} ( [\mJ_{g} (\zz)]_{ \cP_{1} } (\zz)  ) + \text{rank} ( [\mJ_{g} (\zz)]_{ \cP_{2} } (\zz)  ) $.
        
    \end{enumerate}
\end{restatable}

\paragraph{Discussion on the conditions.}
Assumption~\ref{asmp:local_identification}-\ref{asmp:change_dimensions} stipulates that the influence of $\ss$ is sparse, either in terms of dimension counts or in its intersection with the influence from $\cc$.
It is noteworthy that while the influence is sparse, its location can vary over images, as indicated by the dependence of $\cI_{\ss}$ on $\zz$.
Consequently, it can capture diverse image corruptions and background changes.
Assumption~\ref{asmp:local_identification}-\ref{asmp:source_region_invertibility} ensures that $ [\xx]_{ \cI_{\cc \setminus \ss} } $ is sufficiently informative about $\cc$.
For instance, it precludes scenarios where a sparse corruption alters the top stroke of ``7'' to resemble ``1'', rendering the uncorrupted region fundamentally unidentifiable.
Assumption~\ref{asmp:local_identification}-\ref{asmp:irreducibility} enforces the dependence alluded to in our previous discussion: the unaffected dimensions $ [\xx]_{ \cI_{\cc \setminus \ss} } $ exhibit mechanistic dependence across them, characterized by the Jacobian rank~\citep{brady2023provably}.
Thus, generating separate parts of an object necessitates more capacity than generating the entire object, as the dependence across the two parts can inform each other's generation.
This inherent dependence enables the identification of the unaffected region.

\begin{restatable}[Extrapolation under Sparse Shifts]{theorem}{localextrapolationtheorem} \label{thm:local_extrapolation}
    Assuming a generating process in Equation~\ref{eq:data_generating},
    we estimate the distribution with model $ (\hat{g}, \hat{p}(\hat{\cc}), \hat{p}(\hat{\ss}))$ with the objective:
    \begin{align} \label{eq:extrapolation_objective_local}
        \sup \hat{p}( \hctgt ), \quad \text{Subject to:} \quad \hat{p}(\xx) = p(\xx), \, \forall \xx \in \cX_{\mathrm{src}}.
    \end{align}
    Under Assumption~\ref{asmp:local_identification}, the estimated model can attain the identifiability in Definition~\ref{def:identifiability}.
\end{restatable}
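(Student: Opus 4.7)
The plan is to reduce the extrapolation problem to a pointwise block-identification of $\cc$ through the dimensions of $\xx$ that are uninfluenced by $\ss$, namely $[\xx]_{\cI_{\cc \setminus \ss}(\zz)}$. I would proceed in three stages: (i) use the source-matching constraint to relate true and estimated latents via a diffeomorphism, (ii) use the Sparse Influence and Mechanistic Dependence conditions to show that $\cI_{\cc \setminus \ss}(\zz)$ and the corresponding latent block are identifiable pointwise on $\cZ$, and (iii) transfer this pointwise identification to the target point, using the likelihood term $\sup \hat{p}(\hctgt)$ to rule out degenerate explanations.

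For the first step, invertibility and smoothness (Assumption~\ref{asmp:local_identification}-\ref{asmp:invertibility_local_extrapolation}) together with $\hat{p}(\xx)=p(\xx)$ on $\cX_{\mathrm{src}}$ imply that the composition $h := \hat{g}^{-1}\circ g$ is a diffeomorphism on $\cZ_{\mathrm{src}}$, so $\hat{\zz}=h(\zz)$ there. The substantive question is then what structural constraints the remaining assumptions place on $h$; in particular, whether $h$ must restrict to a bijection on the $\cc$-coordinates alone.

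For the second step, I would argue that the Sparse Influence structure on $g$ (Assumption~\ref{asmp:local_identification}-\ref{asmp:change_dimensions}) passes to $\hat{g}$ via the source-matching constraint, since any deviation would produce a Jacobian structure inconsistent with $p(\xx)$ on $\cX_{\mathrm{src}}$. Then Assumption~\ref{asmp:local_identification}-\ref{asmp:irreducibility} supplies a rank-based criterion, in the spirit of \citet{brady2023provably}, that intrinsically characterizes $\cI_{\cc \setminus \ss}(\zz)$ at every $\zz$: it is the unique maximal block of dimensions whose Jacobian submatrix is rank-irreducible under nontrivial partitioning. Hence both $g$ and $\hat{g}$ must agree on this partition, and $h$ must map the $\cc$-block to the $\hat{\cc}$-block. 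Combined with Assumption~\ref{asmp:local_identification}-\ref{asmp:source_region_invertibility}, which ensures $[\xx]_{\cI_{\cc\setminus \ss}(\zz)}$ uniquely pins down $\cc$, this delivers a bijection $h_{\cc}\colon \cc \mapsto \hat{\cc}$ and the identifiability of Definition~\ref{def:identifiability} on the source.

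For the third step, since the rank criterion is pointwise in $\zz$, it applies equally at $\ztgt$, identifying $[\xtgt]_{\cI_{\cc \setminus \ss}(\ztgt)}$ and thus a unique $\ctgt \in \cC_{\mathrm{src}}$ through Assumption~\ref{asmp:local_identification}-\ref{asmp:source_region_invertibility}. The objective $\sup \hat{p}(\hctgt)$ plays the role of selecting, among the estimators that fit the source exactly, the one that places $\hctgt$ inside $h_{\cc}(\cC_{\mathrm{src}})$, ruling out degenerate alternatives that would account for $\xtgt$ by setting $\hctgt$ to an exotic value and absorbing the discrepancy into $\hstgt$. The resulting $\hctgt = h_{\cc}(\ctgt)$ gives the desired identification. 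The main obstacle I expect is the $\zz$-dependence of the index sets: unlike standard block-identification results over a fixed partition, here the affected dimensions vary with the latent configuration, so the rank-based identification must be stable enough to hold at the out-of-support $\ztgt$, and the argument ruling out degenerate $\hat{g}$ must be sensitive to the fact that the sparse structure is encoded only locally by $\mJ_g$ rather than globally.
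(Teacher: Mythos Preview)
Your proposal is correct and follows essentially the same approach as the paper: identify the index set $\cI_{\cc\setminus\ss}(\zz)$ pointwise via the rank equality lemma of \citet{brady2023provably} combined with the Mechanistic Dependence (rank-irreducibility) condition, then invoke Invariant Variable Informativeness to pin down $\cc$ uniquely, and finally use $\sup\hat{p}(\hctgt)$ to land $\hctgt$ on the source-supported values. One small correction: the Sparse Influence structure on $\hat{g}$ is not \emph{derived} from the source-matching constraint (matching $p(\xx)$ alone does not force a particular Jacobian sparsity pattern); rather, the paper imposes it as a modeling constraint on the estimator, so your stage~(ii) should treat it as an assumption on $\hat{g}$ rather than a consequence of distribution matching.
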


\paragraph{Proof sketch.}
Maximizing the likelihood $ \hat{p} (\hctgt) $ assigns a value $ \hc^{*} \in \hat{\cC} $ to $ \hctgt $.
Building on our motivation, we leverage mechanistic dependence (Assumption~\ref{asmp:local_identification}-\ref{asmp:irreducibility}) to identify the unaffected dimension indices $ \cI_{ \cc \setminus \ss } (\zz) \subset [d_{\xx}] $ with our estimated model. 
In other words, we have $ \cI_{ \cc \setminus \ss } (\zz) = \hat{\cI}_{ \cc \setminus \ss } ( \hat{\zz} ) $.
Consequently, the unaffected dimensions in the estimated variable equal their counterparts in the true model: $ [\xtgt]_{ \hat{\cI}_{ \cc \setminus \ss } (\hc^{*},  \hs^{*} ) } = [\xtgt]_{\cI_{ \cc \setminus \ss }(\ctgt, \stgt)} $.
Furthermore, Assumption~\ref{asmp:local_identification}-\ref{asmp:source_region_invertibility} stipulates that the dimensions in the target sample $[\xtgt]_{\cI_{ \cc \setminus \ss }(\ctgt, \stgt)}$ cannot be attained by other $ \cc \neq \ctgt $, so we have established that $\hc^{*}$ corresponds to the correct value $\ctgt$. Full proof is in Appendix~\ref{app:local_shift_proof}.

It's worth noting that unlike the global shift case (Theorem~\ref{thm:extrapolation}), here we do not place a constraint on the out-of-support-ness of $\stgt$, a point we empirically verify in Section~\ref{subsec:global_local_exp}.

\subsection{Implications for Practical Algorithms} \label{sec:connection}

\paragraph{Generative adaptation.}
Our theoretical framework, inherently a generative model, can be implemented through auto-encoding over the source distribution and the target.
Akin to our estimation framework, MAE-TTT~\citep{gandelsman2022test} trains a masked auto-encoding model ($\enc$ and $\dec$) on the source distribution and adapts to target samples through the auto-encoding objective.
Consequently, we have $ \dec(\enc(\xx)) \approx \xx $ for $\xx \in \cX_{\mathrm{src}} \cup \{ \xtgt \}$, which approximates the distribution-matching aspect of our estimation objectives Equation~\ref{eq:extrapolation_objective} and Equation~\ref{eq:extrapolation_objective_local}.

Despite the resemblance on the reconstruction objective, MAE-TTT does not explicitly perform the representation alignment as our objectives -- a classifier $\cls$ is only trained on the labeled source distribution, which takes in $\enc$'s output $\hat{\zz}$ and produces logit values. 
In addition, our objectives entail maximizing the target likelihood $ \hat{p} (\hctgt) $ to align $\hctgt$ to the source support $ \hat{\cC}_{\text{src}} $.
As large logit values indicate the sample is close to distribution modes~\citep{shu2018dirt,grandvalet2004semi} and $\enc$ is enforced invertible through auto-encoding, we can interpret minimizing the entropy of $ \cls ( \hat{\zz}_{\mathrm{tgt}} ) $ as filtering $\hat{\zz}_{\mathrm{tgt}}$ to obtain $\hctgt$ and driving it towards the modes of $ \hat{p} (\hc) $. 
Therefore, we implement the entropy minimization loss $ \inf -\log \sum_{y} \cls( \hat{\zz}_{\mathrm{tgt}} )_{y} \log (\cls( \hat{\zz}_{\mathrm{tgt}} )_{y}) $ as a surrogate for maximizing $ p(\hat{\cc}) $.
We show that this significantly boosts the performance of MAE-TTT in Section~\ref{subsec:maettt}.

\paragraph{Regularization.}
While our objectives simultaneously involve the source distribution and the target sample, the source distribution may not be accessible during adaptation.
Aggressive updates on the target sample may distort the source information stored in the model and ultimately impair the performance.
To address this, we propose to impose regularization on the source-pretrained backbone during adaptation to enforce minimal changes and preserve the source information.
In Section~\ref{subsec:sparsity_exp}, we instantiate this with low-rank updates and sparsity constraints, showcasing the resultant benefits.

\begin{table}[t]
\centering
\small
\setlength{\tabcolsep}{8pt} 
\renewcommand{\arraystretch}{1.2} 
\caption{
\small
\textbf{Synthetic data test accuracy} under both dense and sparse shifts across a range of distances.
}
\begin{tabular}{
    l
    *{4}{S[table-format=1.2]}
    *{4}{S[table-format=1.2]}
}
\toprule
\textbf{Shifts} & \multicolumn{4}{c}{\textbf{Dense}} & \multicolumn{4}{c}{\textbf{Sparse}} \\
\cmidrule(lr){2-5} \cmidrule(lr){6-9}
\textbf{Distance} & {12.0} & {18.0} & {24.0} & {30.0} & {18.0} & {24.0} & {30.0} & {36.0} \\
\midrule
Only Source & 0.59 & 0.55 & 0.45 & 0.45 & 0.54 & 0.54 & 0.56 & 0.52 \\
\addlinespace
iMSDA~\cite{kong2022partial} & 0.46 & 0.48 & 0.48 & 0.50 & 0.50 & 0.36 & 0.40 & 0.54 \\
\addlinespace
\textbf{Ours} & \textbf{0.78} & \textbf{0.69} & \textbf{0.72} & \textbf{0.72} & \textbf{0.72} & \textbf{0.72} & \textbf{0.76} & \textbf{0.70} \\
\bottomrule
\end{tabular}
\vspace{-0.3cm}
\label{tab:synthetic}
\end{table}


\section{Synthetic Data Experiments}
\label{sec:syn_exp}

In this section, we conduct synthetic data experiments on classification to directly validate the theoretical results in Section~\ref{sec:theory}.
We present additional experiments on regression in Section~\ref{app:regression}.

\paragraph{Experimental setup.} 
We generated the synthetic data following the generative process in Equation~\ref{eq:data_generating}, with \(d_{\cc} = 4\) and \(d_{\ss} = 2\).
We focus on binary classification and sample class embeddings $\cc_{1}$ and $\cc_{2}$ from $\cN(0, \mI_{c}) $ and $\cN(2, \mI_{c})$ respectively.
We sample $ \ssrc $ from a truncated Gaussian centered at the origin and sample $ \stgt $ at multiple distances from the origin.
For the dense-shift case, we concatenate $\cc$ and $\ss$ and feed them to a well-conditioned 4-layer multi-layer perceptron (MLP) with ReLU activation to obtain $\xx$.
For the sparse-shift case, we pass $\cc$ to a 4-layer MLP to obtain a $4$-d vector. 
We duplicate $2$ dimensions of this vector and add $\ss$ to it. 
The final $\xx$ is the concatenation of the $4$-d vector and the $2$-d vector.
We sample $10$k points for the source distribution and 1 target sample for each run.
We perform $50$ runs for each configuration and compute the accuracy on the target samples.
More details can be found in Appendix~\ref{app:synthetic}.

\paragraph{Results and discussions.} 
We compared our method with iMSDA~\cite{kong2022partial} and a model trained only on source data. 
The results in both dense and sparse shift settings are summarized in Table~\ref{tab:synthetic}. 
Our method consistently outperforms both baseline methods (nearly random guesses) by a large margin on all sub-settings, validating our theoretical results. 
The results on iMSDA suggest that directly applying domain-adaptation methods to the extrapolation task may result in negative effects for lack of the target distribution in their training.

\section{Real-world Data Experiments}
\label{sec:real_exp}

    
    
    

    
    


\begin{table}[t]
    \centering
    \renewcommand{\arraystretch}{1.2}
    \setlength{\tabcolsep}{8pt}
    \caption{
    \small
    \textbf{Comparison of SOTA TTA Methods on CIFAR10-C, CIFAR100-C, and ImageNet-C.} Average error rates over 15 test corruptions are reported. Baseline results are from \citet{tomar2023tesla}. Values are (means $\pm$ standard deviations) over three random seeds. $^*$ indicates our reproductions.
    }
    \begin{tabular}{@{}lccc@{}}
        \toprule
        \textbf{Method} & \textbf{CIFAR10-C} & \textbf{CIFAR100-C} & \textbf{ImageNet-C} \\
        \midrule
        Source~\cite{tomar2023tesla} & 29.1 & 60.4 & 81.8 \\
        BN~\cite{nado2020evaluating}   & 15.6 & 43.7 & 67.7 \\
        TENT~\cite{wang2020tent}       & 14.1 & 39.0 & 57.4 \\
        SHOT~\cite{liang2020we}        & 13.9 & 39.2 & 68.7 \\
        TTT++~\cite{liu2021ttt++}      & 15.8 & 44.4 & 59.3 \\
        TTAC~\cite{su2022revisiting}   & 13.4 & 41.7 & 58.7 \\
        \midrule

        TeSLA-s~\cite{tomar2023tesla} & 12.1 & 37.3 & 53.1 \\
        
        \textbf{TeSLA-s+SC} & \textbf{11.7 $\pm$ 0.01} $\downarrow$ & \textbf{37.0 $\pm$ 0.06} $\downarrow$ & \textbf{50.9 $\pm$ 0.15} $\downarrow$ \\

        \midrule
        
        TeSLA$^*$~\cite{tomar2023tesla} &  $12.5 \pm 0.04 $  & $38.2 \pm 0.03 $ & $55.0 \pm 0.17$ \\
        
        \textbf{TeSLA+SC} & \textbf{12.1 $\pm$ 0.11} $\downarrow$ & \textbf{38.0 $\pm$ 0.13} $\downarrow$ & \textbf{54.5 $\pm$ 0.12} $\downarrow$ \\
        
        \bottomrule
    \end{tabular}
    \label{tab:results_classification}
    \vspace{-0.3cm}
\end{table}

We provide real-world experiments to validate our theoretical insights for practical algorithms (Section~\ref{sec:connection}) and theoretical results (Section~\ref{subsec:sparse_theory}).
More results can be found in Appendix~\ref{app:exp}.
\footnote{The code is provided \href{https://github.com/Lingjing-Kong/extrapolation_ttt.git}{here}.}

\subsection{Generative Adaptation with Entropy Minimization} \label{subsec:maettt}
As discussed in the first implication in Section~\ref{sec:connection}, we incorporate an entropy-minimization loss to MAE-TTT and compare it with the original MAE-TTT.

\paragraph{Experimental setup.} 
We conduct experiments on ImageNet-C~\citep{hendrycks2019benchmarking} and ImageNet100-C~\citep{tian2020contrastive} with 15 different types of corruption. 
For the baseline, we utilize the publicly available \href{https://github.com/yossigandelsman/test_time_training_mae}{code} of MAE-TTT. 
In our approach, we do not directly integrate the entropy-minimization loss into the MAE-TTT framework. 
This is because the training process of self-supervised MAE relies on masked images, whereas entropy-minimization requires the classification of the entire image. 
To address this, we introduce additional training steps with unmasked images and apply the entropy-minimization loss during these steps.
Specifically, the training process for each test-time iteration is split into two stages. 
We first follow the MAE-TTT approach by inputting masked images and training the model using reconstruction loss. In this stage, only the encoder is updated.
Then, we input full images (32 in a batch) and optimize the model with the entropy minimization loss following SHOT~\cite{liang2020we}. In this stage, both the encoder and classifier are optimized.
The learning rates for both stages are set the same.

\begin{table}[h!]
\centering
\vspace{-1.5em}
\caption{
    \small
    \textbf{Test accuracy (\%) on ImageNet-C.} The baseline results are from \citet{gandelsman2022test}.
}
\label{tab:maettt_baselines}
\resizebox{\textwidth}{!}{%
\begin{tabular}{l|ccccccccccccccc|c}
\toprule
Acc (\%) & brigh & cont & defoc & elast & fog & frost & gauss & glass & impul & jpeg & motn & pixel & shot & snow & zoom & Avg \\
\midrule
Joint Train & 62.3 & 4.5  & 26.7 & 39.9 & 25.7 & 30.0 & 5.8  & 16.3 & 5.8  & 45.3 & 30.9 & 45.9 & 7.1  & 25.1 & 31.8 & 26.88 \\
Fine-Tune   & 67.5 & 7.8  & 33.9 & 32.4 & 36.4 & 38.2 & 22.0 & 15.7 & 23.9 & 51.2 & 37.4 & 51.9 & 23.7 & 37.6 & 37.1 & 34.45 \\
ViT Probe   & 68.3 & 6.4  & 24.2 & 31.6 & 38.6 & 38.4 & 17.4 & 18.4 & 18.2 & 51.2 & 32.2 & 49.7 & 18.2 & 35.9 & 32.2 & 32.06 \\
TTT-MAE     & 69.1 & 9.8  & \textbf{34.4} & 50.7 & 44.7 & 50.7 & 30.5 & 36.9 & 32.4 & 63.0 & \textbf{41.9} & 63.0 & 33.0 & 42.8 & \textbf{45.9} & 45.92 \\
\midrule
\textbf{Ours} & \textbf{73.8} & \textbf{14.0} & 33.6 & \textbf{69.0} & \textbf{47.8} & \textbf{64.6} & \textbf{38.6} & \textbf{42.2} & \textbf{36.6} & \textbf{68.4} & 32.4 & \textbf{67.4} & \textbf{41.2} & \textbf{51.2} & 35.4 & \textbf{47.77} \\
\bottomrule
\end{tabular}%
}
\vspace{-1em}
\end{table}

\paragraph{Comparison with baselines.}
In Table~\ref{tab:maettt_baselines}, we compare our method with the baseline MAE-TTT~\citep{gandelsman2022test} and other baselines therein.
We can observe that our algorithm largely boosts the performance of the MAE-TTT baseline over most corruption types.
This corroborates our theoretical insights and showcases its practical value.

\begin{table}[h]
\centering
\small
\renewcommand{\arraystretch}{1.1}
\setlength{\tabcolsep}{3pt}
\vspace{-0.4cm}
\caption{
\small
\textbf{Understanding entropy-minimization steps on ImageNet100-C}. Values are classification accuracy (mean and standard deviation) over three random seeds.}
\label{tab:entropy_steps}
\begin{tabular}{lccccc}
\toprule
 & \multicolumn{1}{c}{\multirow{2}{*}{\textbf{Source}}} & \multicolumn{1}{c}{\multirow{2}{*}{\textbf{MAE-TTT}}} & \multicolumn{3}{c}{\textbf{Using entropy-minimization}} \\
 \cline{4-6}
 &  &  & 1 Step & 2 Steps & 3 Steps \\
\midrule
\textbf{Acc} & 50.29 & 59.12 {\tiny $\pm$ 0.35} & 63.99 {\tiny $\pm$ 0.25} & 65.09 {\tiny $\pm$ 0.23}  & 65.01 {\tiny $\pm$ 0.39} \\
\bottomrule
\end{tabular}
\vspace{-.5em}
\end{table}

\paragraph{Understanding entropy-minimization steps.}
Table~\ref{tab:entropy_steps} presents the results of entropy-minimization with different training steps. 
The results indicate that the additional entropy-minimization steps significantly enhance the performance of the MAE-TTT framework, demonstrating the synergy between auto-encoding and entropy-minimization as indicated in our theoretical framework.

\subsection{Sparsity Regularization} \label{subsec:sparsity_exp}

As suggested by the second implication in Section~\ref{sec:connection}, we integrate sparsity constraints into the state-of-the-art TTA method, TeSLA/TeSLA-s~\cite{tomar2023tesla}. 
Although our theoretical results rely on a generative model, we demonstrate that our implications are also applicable to discriminative models.

\paragraph{Experimental setup.} 
We conduct experiments on the CIFAR10-C, CIFAR100-C, and ImageNet-C datasets~\citep{hendrycks2019benchmarking}, following the protocols outlined for TeSLA and TeSLA-s~\citep{tomar2023tesla}, with and without training data information.
In the pre-train stage, we apply the ResNet50~\citep{he2016deep} as the backbone network and follow prior work~\citep{liu2021ttt++,su2022revisiting} to pre-train it on the clean CIFAR10, CIFAR100, and ImageNet training sets, with joint contrastive and classification losses. 
In the test-time adaptation process, we adopt the sequential TTA protocol as outlined in TTAC~\citep{su2022revisiting} and TeSLA~\citep{tomar2023tesla}. 
This protocol prohibits the change of training objectives throughout the test phase. 
To encourage sparsity, we add low-rank adaptation (LoRA) modules~\citep{hu2021lora} to the backbone network, which limits the adaptation to low intrinsic dimensions. Beyond LoRA, we further implement a masking layer with corresponding sparsity constraint ($\ell_{1}$ loss) to filter out redundant changes. More details can be found in Appendix~\ref{app:exp}.

\textbf{Results analysis.} 
The average error rates under 15 corruption types for all CIFAR10-C, CIFAR100-C, and ImageNet-C datasets are summarized in Table~\ref{tab:results_classification}. 
We can observe that sparsity constraints consistently improve performance over the current SOTA method, TeSLA/TeSLA-s, across all three datasets. 
The lightweight nature of the sparsity constraint and its consistent performance enhancements make it a valuable addition.
This demonstrates the potential of sparsity constraints as a versatile, plug-and-play module for enhancing existing TTA methods.

\subsection{Shift Scope and Severity}
\label{subsec:global_local_exp}

\begin{wrapfigure}{r}{0.5\textwidth}
    \centering
    \vspace{-0.4cm}
    \includegraphics[width=0.5\textwidth]{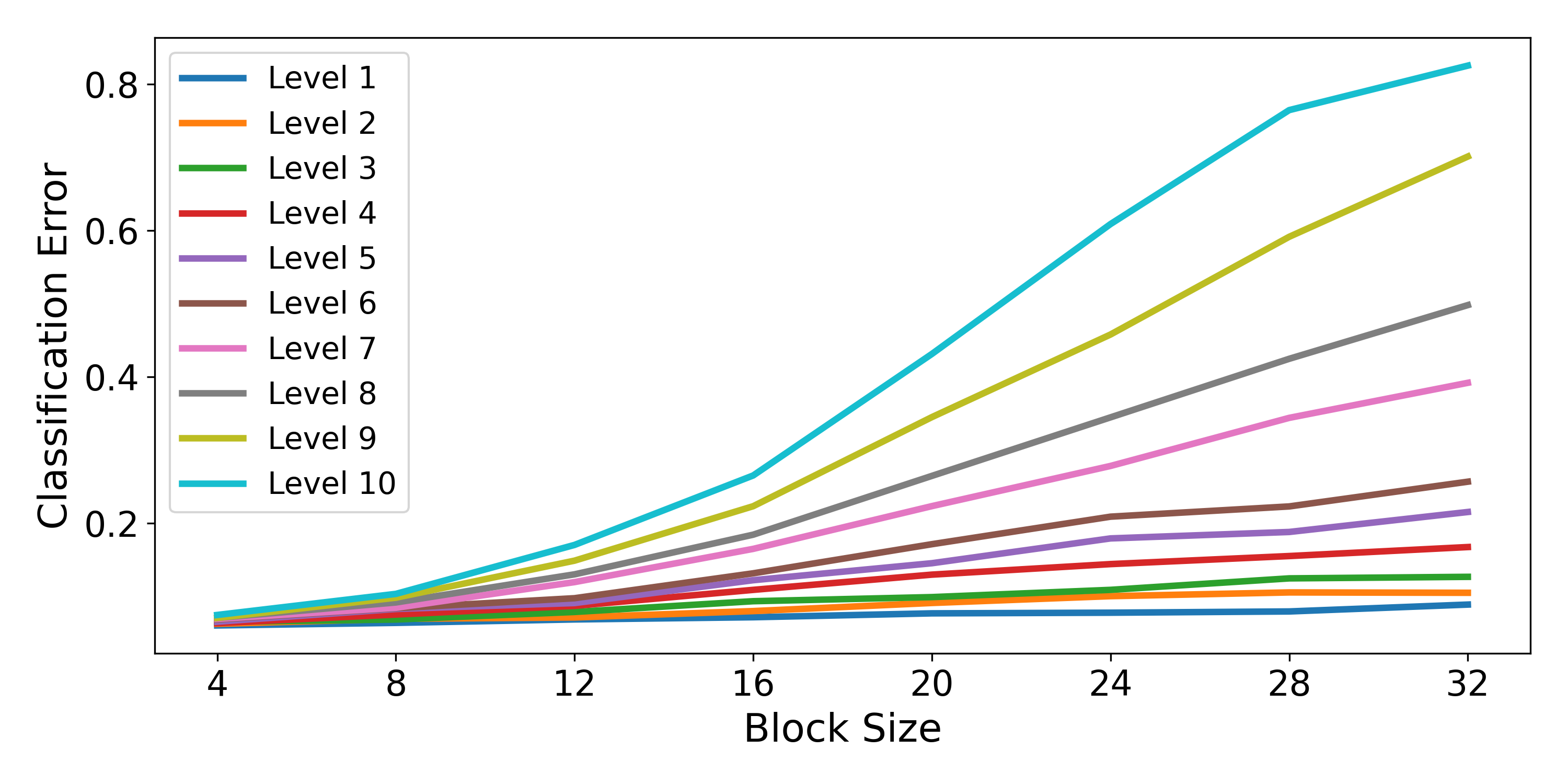}
    \vspace{-0.5cm}
    \caption{
    \small
    \textbf{TTA classification errors under different levels of shift severity levels and scopes.}}
    \label{fig:local_intensity}
    \vspace{-0.4cm}
\end{wrapfigure}
To investigate the trade-off between the shift scope (dense vs. sparse) and severity, we simulate different levels of corruption severity and corrupted region sizes and evaluate a classical TTA method TENT~\cite{wang2020tent} on these configurations.
Following~\cite{hendrycks2019benchmarking}, we inject impulse noise to the CIFAR10 dataset, with noise levels ranging from 1 to 10 to simulate various severity levels. 
To control the shift's scope, we crop regions of various sizes and introduce corruption only to this region.
Figure~\ref{fig:local_intensity} displays classification error curves under various shift severity levels and region sizes. 
We can observe that classification errors rise with increasing noise levels and region sizes. 
Notably, for large block sizes (dense shifts), the performance dramatically declines and even collapses as the severity level rises, whereas the performance remains almost constant over all severity levels in the sparse shift regime, verifying the theoretical conditions for Theorem~\ref{thm:extrapolation} and Theorem~\ref{thm:local_extrapolation}.

\section{Conclusion and Limitations} \label{sec:conclusion}
In this work, we characterize extrapolation with a latent-variable model that encodes a minimal change principle.
Within this framework, we establish clear conditions under which extrapolation becomes not only feasible but also guaranteed, even for complex nonlinear models in deep learning.
Our conditions reveal the intricate interplay among the generating function's smoothness, the out-of-support degree, and the influence of the shift.
These theoretical results provide valuable implications for the design of practical test time adaptation methods, which we validate empirically.

\textbf{Limitations}:  
On the theory aspect, the Jacobian norm utilized in Theorem~\ref{thm:extrapolation} only considers the global smoothness of the generating function and thus may be too stringent if the function is much more well-behaved/smooth over the extrapolation region of concern. Therefore, one may consider a refined local condition to relax this condition. On the empirical side, our theoretical framework entails learning an explicit representation space. Existing methods without such a structure may still benefit from our framework but to a lesser extent. 
Also, our framework involves several loss terms including reconstruction, classification, and the likelihood of the target invariant variable. A careful re-weighting of these terms may be needed during training.

\clearpage

\paragraph{Acknowledgments.}
We thank the anonymous reviewers for their valuable insights and recommendations, which have greatly improved our work.
The work of L. Kong is supported in part by NSF DMS-2134080 through an award to Y. Chi. 
This material is based upon work supported by NSF Award No. 2229881, AI Institute for Societal Decision Making (AI-SDM), the National Institutes of Health (NIH) under Contract R01HL159805, and grants from Salesforce, Apple Inc., Quris AI, and Florin Court Capital.
P. Stojanov was supported in part by the National Cancer Institute (NCI) grant number: K99CA277583-01, and funding from the Eric and Wendy Schmidt Center at the Broad Institute of MIT and Harvard.
This research has been graciously funded by the National Science Foundation (NSF) CNS2414087, NSF BCS2040381, NSF IIS2123952, NSF IIS1955532, NSF IIS2123952; NSF IIS2311990; the National Institutes of Health (NIH) R01GM140467; the National Geospatial Intelligence Agency (NGA) HM04762010002; the Semiconductor Research Corporation (SRC) AIHW award 2024AH3210; the National Institute of General Medical Sciences (NIGMS) R01GM140467; and the Defense Advanced Research Projects Agency (DARPA) ECOLE HR00112390063. 
Any opinions, findings, and conclusions or recommendations expressed in this publication are those of the author(s) and do not necessarily reflect the views of the National Science Foundation, the National Institutes of Health, the National Geospatial Intelligence Agency, the Semiconductor Research Corporation, the National Institute of General Medical Sciences, and the Defense Advanced Research Projects Agency.


\bibliography{configuration/references}
\bibliographystyle{unsrtnat}

\newpage
\appendix
\onecolumn

\textit{\large Appendix for}\\ \ \\
      {\large \bf ``Towards Understanding Extrapolation: a Causal Lens ''}\

\newcommand{\beginsupplement}{%
	\setcounter{table}{0}
	\renewcommand{\thetable}{A\arabic{table}}%
	
	\setcounter{figure}{0}
	\renewcommand{\thefigure}{A\arabic{figure}}%
	
	\setcounter{section}{0}
	\renewcommand{\thesection}{A\arabic{section}}%

    \setcounter{theorem}{0}
    \renewcommand{\thetheorem}{A\arabic{theorem}}%

    \setcounter{corollary}{0}
    \renewcommand{\thecorollary}{A\arabic{corollary}}%

        \setcounter{lemma}{0}
    \renewcommand{\thelemma}{A\arabic{lemma}}%

}

\vspace{.5cm}
\beginsupplement

{\large Table of Contents}

\DoToC 

\clearpage

\section{Related Work}
\label{app:related_work}
\vspace{-0.1cm}

In this section, we discuss some related topics including extrapolation, latent-variable identification, and test-time adaptation. 




\paragraph{Extrapolation.}
Out-of-distribution generalization has attracted significant attention in recent years.
Unlike our work, the bulk of the work is devoted to generalizing to target distributions on the same support as the source distribution~\citep{ben2010theory,sugiyama2007covariate,zhang2013domain}.
Recent work~\citep{lachapelle2023additive,wiedemer2023compositional,wiedemer2023provable,zhao2022compositional} investigates extrapolation in the form of compositional generalization by resorting to structured generating functions (e.g., additive, slot-wise).
Another line of work~\citep{netanyahu2023learning,shen2023engression,dong2022steps} studies extrapolation in regression problems and does not consider the latent representation.
\citet{saengkyongam2023identifying} leverage a latent variable model and assumes a linear relation between the intervention variable and the latent variable to handle extrapolation.
In this work, we formulate extrapolation as a latent variable identification problem.
Unlike the semi-parametric conditions in prior work, our conditions do not constrain the form of the generating function and are more compatible with deep learning models and tasks.
We demonstrate that our conditions naturally lead to implications benefiting practical deep-learning algorithms.

\paragraph{Latent-variable identification for transfer learning.}
Identifying latent variables in a causal model has become one canonical paradigm to formalize and understand representation learning in the deep learning regime.
Typically, one would assume some latent variables $\zz$ generate the observed data $\xx$ (e.g., images, text) through a generating function.
However, the nonlinearity of deep learning models requires the generating function to be nonlinear, which has posed major technical difficulty in recovering the original latent variable~\citep{hyvarinen2000independent}.
To overcome this setback, a line of work~\citep{khemakhem2020variational,khemakhem2020icebeem,hyvarinen2016unsupervised,hyvarinen2019nonlinear} assumes the availability of an auxiliary label $\uu$ for each sample $\xx$ and under different $\uu$ values, each component $z_{i}$ of $\zz$ experiences sufficiently large shift in its distribution.
This condition leads to component-wise identification of $\zz$, i.e., each estimate $\hat{z}_{i}$ is equivalent to $ z_{\pi(i)} $ up to an invertible mapping for a permutation function $\pi:[d_{z}] \to [d_{z}]$.
Since this framework assumes all latent components' distributions vary over distributions indexed by $\uu$, it doesn't assume the existence of some shared, invariant information across distributions, which is often the case for transfer learning tasks.
To address this issue, recent work~\citep{kong2022partial,li2023subspace} introduce a partition of $\zz$ into an invariable variable $\cc$ and an changing variable $\ss$ (i.e., $\zz:= [\cc, \ss]$) such that $ \cc $'s distribution remains constant over distributions.
They show both $\cc$ and $\ss$ can be identified and one can directly utilize the invariant variable $\cc$ for domain adaptation.  
However, their techniques crucially rely on the variability of the changing variable $\ss$, mandating the availability of multiple sufficiently disparate distributions (including the target) and their overlapping supports.
These constraints make them unsuitable for the extrapolation problem.
In comparison, our theoretical results give identification of the invariant variable $\cc$ (the on-support variable in the extrapolation context) with only one source distribution $ p_{\mathrm{src}} (\xx) $ and as few as one out-off support target sample $\xtgt$ through mild assumptions on the generating function, which directly tackles the extrapolation problem.


\paragraph{Test-time adaptation.}
Test-time Adaptation (TTA) aims at adapting models trained on a source domain to align with the target domain during testing~\cite{wang2022towards, goyal2022test, lim2022ttn, xiao2022learning, song2023ecotta, prabhudesai2024test, ma2024swapprompt}. It is broadly classified based on whether the training objective is modified. Test-time Training (TTT) methods~\cite{sun2020test,liu2021ttt++,su2022revisiting,mirza2023actmad,li2023robustness}, including TTT~\citep{sun2020test} and TTT++~\citep{liu2021ttt++}, proficiently adjust models to target domains by implementing similar self-supervised learning strategies on both training and testing data. In contrast, Sequential Test-Time Adaptation~\cite{wang2020tent,prabhudesai2024test, ma2024swapprompt, wang2023feature, jang2022test, chen2023improved, niu2022efficient, zhao2022delta, niu2022towards} (sTTA) garners significant interest due to its practicality, notably its one-pass sequential inference and no training objective access. Research in sTTA primarily concentrates on two facets: the selection of model parameters for adaptation and the refinement of pseudo-labeling techniques for enhanced efficiency. For instance, TENT~\citep{wang2020tent} fine-tunes the Batch Normalization (BN) layers by minimizing entropy, SHOT~\citep{liang2020think} adjusts the backbone network while maintaining a static classifier, and T3A~\citep{iwasawa2021test} updates the classifier prototype. Moreover, a burgeoning line of research~\citep{chen2022contrastive,tomar2023tesla,su2022revisiting,goyal2022test,wang2020tent,liang2020think} focuses on deriving more robust self-training signals through improved pseudo labeling strategies. For example, TTAC~\citep{su2022revisiting} employs clustering techniques to extract more accurate pseudo labels.
Despite the prominent recent development, these algorithms tend to be brittle and sensitive to hyper-parameter tuning~\citep{zhao2023pitfalls} and limited in theoretical understanding~\citep{liang2023comprehensive}.
Our work offers formalization and understanding to fill in this gap.
We show that insights inferred from our theory can indeed benefit existing TTA algorithms, which hopefully will serve as the first step to bridge the theory and practice for TTA algorithms.

\vspace{0.2cm}








\section{Proof for Theorem~\ref{thm:extrapolation}} \label{app:extrapolation_proof}

\extrapolationassumption*

We first present Lemma~\ref{lemma:discrete_values} from \citet{kong2024learning} which establishes the discrete information on the source support and serves as the starting point in the proof of Theorem~\ref{thm:extrapolation}.

\begin{lemma}[Source discrete subspace identification~\citep{kong2024learning}] \label{lemma:discrete_values}
Assuming a generating process in Equation~\ref{eq:data_generating}, we estimate the distribution with model $ ( \hat{g}, \hat{p} (
\hat{\cc}), \hat{p} (\hat{\ss}) ) $.
Under Assumption~\ref{asmp:discrete_identification}~\ref{asmp:invertibility_extrapolation},\ref{asmp:compactness},\ref{asmp:discreteness_extrapolation},\ref{asmp:continuous_pdf}, it follows that the estimated variable $\hc$ takes on values from $ \{ \hck{k} \}_{k=1}^{K} $ where each value corresponds uniquely to one value of the true variable $ \cc $, i.e., $\cc = \ck{k} \iff \hc = \hck{k}$.
\end{lemma}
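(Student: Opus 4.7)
The plan is to translate the distribution-matching constraint into a constraint on the level-set geometry of the observed support, and then use the discreteness of $\cc$ together with the invertibility of $g$ to pin down the estimated discrete labels up to relabeling. First I would decompose the source law using Assumption~\ref{asmp:discrete_identification}-\ref{asmp:discreteness_extrapolation}: $p(\xx) = \sum_{k=1}^K p(\ck{k})\, p(\xx\mid \ck{k})$, where $p(\xx\mid \ck{k})$ is the pushforward of $p(\ss\mid \ck{k})$ through $g(\ck{k},\cdot)$ and hence is supported on the $d_\ss$-dimensional submanifold $\cM_k := g(\{\ck{k}\}\times\cS_{\mathrm{src}}) \subset \R^{d_\xx}$. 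By global invertibility of $g$ (Assumption~\ref{asmp:discrete_identification}-\ref{asmp:invertibility_extrapolation}) the $\cM_k$ are pairwise disjoint; since $\cX_{\mathrm{src}}$ is compact (Assumption~\ref{asmp:discrete_identification}-\ref{asmp:compactness}) and $g^{-1}$ is smooth, $\cS_{\mathrm{src}}$ and thus each $\cM_k$ is compact, yielding strictly positive pairwise separation $d(\cM_k,\cM_{k'})>0$ for $k\neq k'$.

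Because the estimator is assumed to respect the same generating structure, $\hat{\cc}$ is likewise discrete with values $\{\hck{j}\}_{j\in[\hat K]}$, giving an analogous decomposition onto pairwise disjoint submanifolds $\hat{\cM}_j := \hat g(\{\hck{j}\}\times\hat{\cS}_{\mathrm{src}})$. The distribution-matching constraint $\hat p(\xx) = p(\xx)$ on $\cX_{\mathrm{src}}$ forces the supports to agree: $\bigcup_j \hat{\cM}_j = \bigcup_k \cM_k$.

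The core step is then to show that each $\hat{\cM}_j$ lies inside exactly one $\cM_k$, and symmetrically. I would argue that the preimage $\hat g(\hck{j},\cdot)^{-1}(\cM_k)\subset \hat{\cS}_{\mathrm{src}}$ is clopen: closed because $\cM_k$ is closed and $\hat g$ is continuous, and open because the positive gap $d(\cM_k,\cM_{k'})>0$ yields a tubular neighborhood of $\cM_k$ that avoids every other $\cM_{k'}$. Combined with the continuity of the estimated conditional density $\hat p(\hat\ss\mid\hck{j})$ on a connected support (the estimator counterpart of Assumption~\ref{asmp:discrete_identification}-\ref{asmp:continuous_pdf}), the clopen decomposition forces $\hat{\cM}_j \subseteq \cM_k$ for a unique $k$. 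The symmetric argument gives $\cM_k \subseteq \hat{\cM}_{j(k)}$ for a unique $j(k)$, producing mutually inverse set maps between $[K]$ and $[\hat K]$. Hence $\hat K = K$ and, after relabeling, $\hat{\cM}_k = \cM_k$. Finally, since both $g$ and $\hat g$ are invertible, $\xx$ uniquely determines both $\cc$ and $\hat{\cc}$ within its manifold, delivering the claimed equivalence $\cc = \ck{k} \iff \hat{\cc} = \hck{k}$.

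The main obstacle I anticipate is rigorously ruling out a disconnected support for some $\hat p(\hat\ss\mid\hck{j})$, since that would in principle allow a single $\hck{j}$ to simultaneously cover two disjoint true manifolds $\cM_k$ and $\cM_{k'}$. I expect to handle this either by (a) a minimality convention on the estimator whereby any disconnected component of a conditional support can be equivalently relabeled as a distinct $\hck{j}$, reducing to the connected case, or (b) inheriting Assumption~\ref{asmp:discrete_identification}-\ref{asmp:continuous_pdf} at the estimator level so that the support of every $\hat p(\hat\ss\mid\hck{j})$ is connected, in which case the clopen-decomposition argument above applies verbatim and the stated bijection follows.
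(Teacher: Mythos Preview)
The paper does not prove this lemma itself: it is imported from \citet{kong2024learning} and invoked as a black box at the start of the proof of Theorem~\ref{thm:extrapolation}. There is therefore no in-paper argument to compare your proposal against.

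That said, your reconstruction is a sensible one. The manifold decomposition, the strictly positive separation obtained from compactness plus invertibility, and the clopen-preimage mechanism are the right ingredients, and the symmetric inclusion argument for the bijection is clean. The gap you flag is real and not merely cosmetic: Assumption~\ref{asmp:discrete_identification}-\ref{asmp:continuous_pdf} asserts only continuity of $p(\ss\mid\cc)$, which does not force the support to be connected, so the clopen argument does not go through from the stated hypotheses alone. Your fix (a) quietly changes what $\hat K$ counts (it may temporarily exceed $K$ before being collapsed back), while fix (b) adds a connectedness hypothesis on the estimator class that is not literally contained in the paper's blanket statement that ``the estimation process respects the conditions of the corresponding true-generating process.'' Either route can be made rigorous, but you should state explicitly which convention you adopt rather than leave it as an expectation; to see how the original source handles this point you would need to consult \citet{kong2024learning} directly.
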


\extrapolationtheorem*

\begin{proof}[Proof for Theorem~\ref{thm:extrapolation}]

Lemma~\ref{lemma:discrete_values} shows that the discrete invariant variable $ \cc $ is identifiable on the source distribution.

In the following, we show that the target's invariant variable $ \ctgt $ is identifiable if $ \stgt $ does not drift too far away from the source support $ \cS_{\text{src}} $.
Suppose that $ \xtgt $ resides on both manifolds $ g( \ck{k}, \cdot ) $ and $ g'( \ck{k'}, \cdot ) $ where $ k \neq k' $. The generating function $ g' \in \cG $ belongs to the generating function class and behaves exactly the same as $ g $ on the source support, i.e., $ g' = g $ over $ \cC \times \cS_{\text{src}} $.   
We define the minimal distance $D( \ck{k}, \ck{k'} )$ between the two manifolds on support boundaries, i.e., $ D( \ck{k}, \ck{k'} ):= \min_{ \ss_{1}, \ss_{2} \in \text{Bd}(\cS_{\text{src}} ) } \norm{ g( \ck{k}, \ss_{1} ) - g( \ck{k'}, \ss_{2} ) } > 0$.
Since $ \xtgt $ lives on both manifolds $ g( \ck{k}, \cdot ) $ and $ g'( \ck{k'}, \cdot ) $, we can express it as $ \xtgt = g( \ck{k}, \stgt ) = g'( \ck{k'}, \stgt' )$.
We define $ \ssrc \in \argmin_{ \ss \in \cS_{\text{src} } } \norm{ \ss - \stgt } $ and $ \ssrc' \in \argmin_{ \ss \in \cS_{\text{src} } } \norm{ \ss - \stgt' } $ as two closest points on the source support to $ \stgt $ and $ \stgt' $ respectively. 
It follows that
\begin{align} \label{eq:two_representations}
    \begin{split}
        \xtgt - g( \ck{k}, \ssrc ) &= \left( \int_{0}^{1} \mJ_{ g(\ck{k}, \cdot ) } ( \ssrc + t \cdot \hh ) dt \right) \hh; \\
        \xtgt - g'( \ck{k'}, \ssrc' ) &= \xtgt - g( \ck{k'}, \ssrc' ) = \left( \int_{0}^{1} \mJ_{ g(\ck{k'}, \cdot ) } ( \ssrc' + t \cdot \hh') dt \right) \hh',
    \end{split}
\end{align}
where $ \hh := \stgt - \ssrc $ and $ \hh' := \stgt' - \ssrc' $. 

It follows from Equation~\ref{eq:two_representations}
\begin{align} \label{eq:distance_true_comparison}
\begin{split}
    &g( \ck{k'}, \ssrc' ) - g( \ck{k}, \ssrc) = \left( \int_{0}^{1} \mJ_{ g(\ck{k}, \cdot ) } ( \ssrc + t \cdot \hh ) dt \right) \hh - \left( \int_{0}^{1} \mJ_{ g(\ck{k'}, \cdot ) } ( \ssrc' + t \cdot \hh' ) dt \right) \hh'; \\
    &\implies\\
    &\norm{ \left( \int_{0}^{1} \mJ_{ g(\ck{k}, \cdot ) } ( \ssrc + t \cdot \hh ) dt \right) \hh - \left( \int_{0}^{1} \mJ_{ g(\ck{k'}, \cdot ) } ( \ssrc' + t \cdot \hh' ) dt \right) \hh' } \ge D( \ck{k}, \ck{k'} ) ;\\
    &\implies\\
    & \norm{ \left( \int_{0}^{1} \mJ_{ g(\ck{k}, \cdot ) } ( \ssrc + t \cdot \hh ) dt \right) \hh} + \norm{\left( \int_{0}^{1} \mJ_{ g(\ck{k'}, \cdot ) } ( \ssrc' + t \cdot \hh' ) dt \right) \hh' } \ge D( \ck{k}, \ck{k'} ) ;\\
    &\implies\\
    & J_{\text{u}} ( \norm{ \hh } + \norm{ \hh'} ) \ge D( \ck{k}, \ck{k'} ) ;\\
    &\implies\\
    & \max \{ \norm{ \hh }, \norm{ \hh'} \} \ge \frac{ D( \ck{k}, \ck{k'} ) }{ 2 J_{\text{u}}}.
\end{split}
\end{align}
Assumption~\ref{asmp:discrete_identification}-~\ref{asmp:out_of_support_distance} states that $\norm{\hh} < \frac{ D( \ck{k}, \ck{k'} ) }{ 2 J_{\text{u}}}$ for the true generating function $g$.
Therefore, $ \xtgt $ can only be explained by one manifold, which we denote as $ g(\ctgt, \cdot ) $.

Finally, we show that the objective Equation~\ref{eq:extrapolation_objective} guarantees that the solution $ \hctgt $ corresponds to the true $ \ctgt $.
We suppose that $ \ctgt = \ck{k} $ which corresponds to $ \hck{k} $ for a specific $k \in[K]$.
First, we note that $ \hctgt $ could only take values from $ \{ \hck{k} \}_{k\in[K]} $ due to the constraint $ \sup \hat{p} (\hcd) $.
Also, the correct solution $ \hck{k} $ is always a feasible solution to the objective Equation~\ref{eq:extrapolation_objective}, since $ \hat{g} $ can take on the true generating function $ g $.
Thus, for another plausible solution $ \hck{k'} \neq \hck{k} $, we would have
\begin{align} \label{eq:distance_estimated_comparison}
    \max\{ \norm{ \hat{\hh} }, \norm{\hat{\hh}'} \} \ge \frac{ D ( \hck{k}, \hck{k'} ) }{ 2 \hat{J}_{\text{u}} },
\end{align}
where the definitions are analogous to those in Equation~\ref{eq:distance_true_comparison} and decorated with $ \hat{ \cdot } $ to indicate the difference.
Due to the distance-minimization term $ \min_{ \hat{g}, \hssrc \in \hat{\cS}_{\text{src}} } \norm{\hstgt - \hssrc } $, the distance for the correct solution $ \hck{k} $ is upper-bounded by $ \norm{ \hat{\hh} } < \frac{ D ( \hck{k}, \hck{k'} ) }{ 2 \hat{J}_{\text{u}} } $, since this is attainable when the estimated generating function is the true function, i.e.,  $ g = \hat{g} $.
Equation~\ref{eq:distance_estimated_comparison} implies that the alternative solution $ \hck{k'} $ would always yield $ \norm{ \hat{\hh'} } \ge \frac{ D ( \hck{k}, \hck{k'} ) }{ 2 \hat{J}_{\text{u}} } > \norm{ \hat{\hh} } $, which the distance-minimizing regularization would exclude.
Therefore, we have shown that the estimated $ \hctgt $ corresponds to the correct $ \ck{k} $.
\end{proof}

\section{Proof for Theorem~\ref{thm:local_extrapolation}} \label{app:local_shift_proof}

\localextrapolationassumption*
\localextrapolationtheorem*

\begin{lemma}[\citet{brady2023provably}] \label{lemma:equality_ranks}
    Let $g, \hat{g}: \R^{d_{\zz}} \to \R^{d_{\xx}}$ be smooth and invertible. Then, for any $\zz \in \R^{\zz}$, $ \cS \subset \R^{d_{\ss}}$, $ \text{rank} ( [\mJ_{ g } (\zz)]_{\cS} ) = \text{rank} ( [\mJ_{ \hat{g} } (\hat{\zz})]_{\cS} ) $, where $ \hat{\zz} := \hat{g}^{-1} \circ g(\zz) $. 
\end{lemma}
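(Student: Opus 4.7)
The plan is to reduce the statement to a direct chain-rule computation combined with the fact that multiplication by an invertible square matrix preserves rank. Since both $g$ and $\hat g$ are smooth invertible maps with smooth inverses, I would introduce the reparametrization $h := \hat g^{-1} \circ g : \R^{d_\zz} \to \R^{d_\zz}$, so that $\hat\zz = h(\zz)$ and the identity $g = \hat g \circ h$ holds everywhere.

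Applying the chain rule at $\zz$ then yields
\begin{equation*}
    \mJ_g(\zz) \;=\; \mJ_{\hat g}\bigl(h(\zz)\bigr)\,\mJ_h(\zz) \;=\; \mJ_{\hat g}(\hat\zz)\,\mJ_h(\zz).
\end{equation*}
Selecting the rows indexed by $\cS$ commutes with right-multiplication, so
\begin{equation*}
    [\mJ_g(\zz)]_\cS \;=\; [\mJ_{\hat g}(\hat\zz)]_\cS\,\mJ_h(\zz).
\end{equation*}
Because $h$ is a diffeomorphism of $\R^{d_\zz}$, its Jacobian $\mJ_h(\zz)$ is a $d_\zz\times d_\zz$ invertible matrix at every point (its determinant is nonzero by the inverse function theorem applied to $h$ and $h^{-1} = g^{-1}\circ\hat g$). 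Right-multiplying any matrix by an invertible square matrix preserves its rank, which immediately gives the desired equality
\begin{equation*}
    \text{rank}\bigl([\mJ_g(\zz)]_\cS\bigr) \;=\; \text{rank}\bigl([\mJ_{\hat g}(\hat\zz)]_\cS\,\mJ_h(\zz)\bigr) \;=\; \text{rank}\bigl([\mJ_{\hat g}(\hat\zz)]_\cS\bigr).
\end{equation*}

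There is no genuine obstacle here; the proof is essentially a one-line application of the chain rule plus the standard linear-algebra fact that rank is invariant under right-multiplication by an invertible matrix. The only bookkeeping point worth being explicit about is that $\cS$ indexes a subset of the $d_\xx$ output coordinates (so that row selection commutes with the matrix product on the right), and that smoothness of both $g^{-1}$ and $\hat g^{-1}$ is what guarantees the invertibility of $\mJ_h(\zz)$ pointwise.
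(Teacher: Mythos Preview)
Your argument is correct and is the standard proof of this fact. Note, however, that the paper does not actually supply its own proof of this lemma; it is imported from \citet{brady2023provably} and used as a black box in the proof of Theorem~\ref{thm:local_extrapolation}, so there is no in-paper proof to compare against.
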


\begin{proof}
    The proof consists of two steps. 
    In the first step, we show the identification of the index set $ \cI_{ \cc \setminus \ss  } (\zz) \subset [d_{\xx}] $ over which $\xx$ receives only $ \cc $' influence. That is, $\hat{g}$ maps the estimated invariant variable $\hat{\cc}$ to $\xx$ dimensions generated by the true invariant variable $\cc$ alone.
    In the second step, we show that the objective $ \sup \hat{p} (\hctgt) $ assigns to the estimated invariant variable for the target sample $ \hctgt $ the source-distribution estimated value $ \hcsrc $ that is also generated by $ \ctgt $.
    Recall the notation $ \zz:= [\cc, \ss] $. We denote the latent source support as $ \cZ_{\mathrm{src}} $ and the set augmented with the target sample as $ \cZ := \cZ_{\mathrm{src}} \cup \{ \zz_{\mathrm{tgt}} \} $

    \paragraph{Step 1.}
    We first show that $\hat{g}$ cannot map the estimated invariant variable $\hat{\cc}$ to $\xx$ dimensions generated by both the invariant variable $\cc$ and the changing variable $\ss$.

    We show this by contradiction.
    Suppose that $ \exists \zz^{*} \in \cZ$, such that 
    \begin{align}\label{eq:assumed_overlap_two}
        \hat{\cI}_{ \cc \setminus \ss } ( \hat{\zz}^{*} ) \cap \cI_{ \cc \setminus \ss } ( \zz^{{*}} ) \neq \emptyset \text{ and } \hat{\cI}_{ \cc \setminus \ss } ( \hat{\zz}^{*} ) \cap \cI_{ \ss } ( \zz^{*} ) \neq \emptyset
    \end{align}
    We partition $ \cI_{ \cc \setminus \ss } ( \zz^{{*}} ) $ into two disjoint sets $ \cI_{\cc \setminus \ss, 1}:= \{ i \in \cI_{ \cc \setminus \ss } ( \zz^{{*}} ) | i \in \hat{\cI}_{ \cc \setminus \ss } ( \hat{\zz}^{*} ) \} $ and $ \cI_{\cc \setminus \ss, 2}:= \{ i \in \cI_{ \cc \setminus \ss } ( \zz^{{*}} ) | i \notin \hat{\cI}_{ \cc \setminus \ss } ( \hat{\zz}^{*} ) \} $ base on the overlap with $ \hat{\cI}_{ \cc \setminus \ss } ( \hat{\zz}^{*} ) $.
    By the supposed condition Equation~\ref{eq:assumed_overlap_two}, it follows that $ \cI_{\cc \setminus \ss, 1} \neq \emptyset $ and $ \cI_{\ss, 1} \neq \emptyset $.
    
    We now show that $ \cI_{\cc \setminus \ss, 2} $ is nonempty by contradiction.
    Suppose that $ \cI_{\cc \setminus \ss, 2} $ is empty.
    It follows that $ \cI_{\cc \setminus \ss} = \cI_{\cc \setminus \ss, 2} \subset \hat{\cI}_{ \cc \setminus \ss } $.
    By definition, we know that $ \cI_{\ss, 1} \subset \hat{\cI}_{ \cc \setminus \ss } $.
    Thus, $ A := \cI_{\cc \setminus \ss}  \cup \cI_{\ss, 1} \subset \hat{\cI}_{ \cc \setminus \ss }  $.
    This inclusion implies that
    \begin{align} \label{eq:non_empty_dimension_constraint}
        \text{rank} ( [\mJ_{ g } (\zz^{*})]_{A,:} ) = \text{rank} ( [\mJ_{ \hat{g} } ( \hat{\zz}^{*})]_{A,:} )  \leq d_{\cc}.
    \end{align}
    Since $ \cI_{\cc \setminus \ss} \cap \cI_{\ss, 1} = \emptyset $, $ [\mJ_{ g } (\zz^{*})]_{\cI_{\cc \setminus \ss}, d_{\cc}+1:d_{\zz}} = 0$, and each row of $ [\mJ_{ g } (\zz^{*})]_{\cI_{\ss}, d_{\cc}+1:d_{\zz}} $ is nonzero by definition, we have that
    \begin{align}
        \text{rank} ( [\mJ_{ g } (\zz^{*})]_{A,:} ) > \text{rank} ( [\mJ_{ g } (\zz^{*})]_{\cI_{\cc \setminus \ss},:} ) = d_{\cc},
    \end{align}
    which contradicts Equation~\ref{eq:non_empty_dimension_constraint}.
    Therefore, $ \cI_{\cc \setminus \ss, 2} $ is nonempty.
    Since we have $ \text{rank} ( [\mJ_{g} (\zz^{*})]_{ \cI_{ \cc \setminus \ss } } ) = d_{\cc} $ and $ (\cI_{ \cc \setminus \ss, 1 }, \cI_{ \cc \setminus \ss, 2 }) $ forms a pair of nonempty partition, Assumption~\ref{asmp:local_identification}-\ref{asmp:irreducibility} implies that
    \begin{align} \label{eq:true_rank_irreducibility}
        \text{rank} ( [\mJ_{g} (\zz^{*})]_{ \cI_{ \cc \setminus \ss, 1 }, : } ) + \text{rank} ( [\mJ_{g} (\zz^{*})]_{ \cI_{ \cc \setminus \ss, 2 }, : } ) > \text{rank} ( [\mJ_{g} (\zz^{*})]_{ \cI_{ \cc \setminus \ss }, : } ) = d_{\cc}.
    \end{align}

    As the $g$'s and $\hat{g}$'s Jacobian matrix ranks are related (Lemma~\ref{lemma:equality_ranks}), Equation~\ref{eq:true_rank_irreducibility} implies that
    \begin{align}
        \text{rank} ( [\mJ_{\hat{g}} (\hat{\zz}^{*})]_{ \cI_{ \cc \setminus \ss, 1 }, : } ) + \text{rank} ( [\mJ_{\hat{g}} (\hat{\zz}^{*})]_{ \cI_{ \cc \setminus \ss, 2 } ,: } ) > d_{\cc}.
    \end{align}

    Since $ \cI_{ \cc \setminus \ss, 1 } \subset \hat{\cI}_{ \cc \setminus \ss, 1 } $ and $ \cI_{ \cc \setminus \ss, 2 } \cap \hat{\cI}_{ \cc \setminus \ss, 1 } = \emptyset $ by definition and $ [\mJ_{\hat{g}} (\hat{\zz}^{*})]_{ \cI_{ \cc \setminus \ss, 2 }, d_{\cc}+1:d_{\zz} } $ has a full-row rank (Assumption~\ref{asmp:local_identification}-\ref{asmp:change_dimensions}), it follows that
    \begin{align}
        \text{rank} ( [\mJ_{\hat{g}} (\hat{\zz}^{*})]_{ \cI_{ \cc \setminus \ss }, : } ) = \text{rank} ( [\mJ_{\hat{g}} (\hat{\zz}^{*})]_{ \cI_{ \cc \setminus \ss, 1 }, : } ) + \text{rank} ( [\mJ_{\hat{g}} (\hat{\zz}^{*})]_{ \cI_{ \cc \setminus \ss, 2 } ,: } ) > d_{\cc}.
    \end{align}

    However, Lemma~\ref{lemma:equality_ranks} implies that
    \begin{align}
        \text{rank} ( [\mJ_{\hat{g}} (\hat{\zz}^{*})]_{ \cI_{ \cc \setminus \ss }, : } ) = \text{rank} ( [\mJ_{g} (\zz^{*})]_{ \cI_{ \cc \setminus \ss }, : } ) = d_{\cc}.
    \end{align}
    Thus, we have arrived at a contradiction.
    We have shown that $\hat{g}$ maps the estimated invariant variable $\hat{\cc}$ to $\xx$ dimensions generated by the true invariant variable $\cc$ alone, i.e., $ \hat{\cI}_{ \cc \setminus \ss} ( \hat{\zz}) \subset \cI_{ \cc \setminus \ss} (\zz)$.
    Moreover, if an index $i'$ from the $\hat{\ss}$'s region $\hat{\cI}_{\ss} (\hat{\zz})$ also belonged to $\cI_{ \cc \setminus \ss} (\zz)$ , i.e., $ i' \in \hat{\cI}_{\ss} (\hat{\zz}) \cap \cI_{ \cc \setminus \ss} (\zz) $, then we would have $ \text{rank} ( [\mJ_{ g } ( \zz )]_{ \cI_{ \cc \setminus \ss} (\zz), : } ) = \text{rank} ( [\mJ_{ \hat{g} } ( \hat{\zz} )]_{ \cI_{ \cc \setminus \ss} (\zz), : } ) > \text{rank} ( [\mJ_{ \hat{g} } ( \hat{\zz} )]_{ \hat{\cI}_{ \cc \setminus \ss} ( \hat{\zz}), : } ) = d_{\cc} $.
    Thus, it follows that we can identify the index set exclusively influenced by $\cc$ over $ \zz \in \cZ $:
    \begin{align} \label{eq:index_set_identification}
        \cI_{ \cc \setminus \ss} (\zz) = \hat{\cI}_{ \cc \setminus \ss} (\hat{\zz}).
    \end{align}

    \paragraph{Step 2.}
    By definition, each value of $\hc$ determines the region $ [\xx]_{ \hat{\cI}_{ \cc \setminus \ss } ( \hat{\cc}, \hat{\ss} )  } $.
    Due to Equation~\ref{eq:index_set_identification}, we have $ [\xx]_{ \hat{\cI}_{ \cc \setminus \ss } ( \hat{\cc}, \hat{\ss} )  } = [\xx]_{ \cI_{ \cc \setminus \ss } ( \cc, \ss )  } $, where $ \cc $ can only take on a unique value according to Assumption~\ref{asmp:local_identification}-\ref{asmp:source_region_invertibility}.  
    Therefore, there exists an one-to-one mapping $ h_{\hc} $ from $\hc$ to $\cc$ over $\cZ$.
    Further, since the maximum likelihood estimation is performed over the entire source distribution $ p(\xx) $ (Equation~\ref{eq:extrapolation_objective_local}), the image of $ h_{\hc} $ equal to the entire $\cC$, thus is onto.
    Thus, we have shown that there exists an invertible mapping $h_{\hc}: \hc \mapsto \cc$ valid over $\cZ$ (including the target sample), resulting from our objective (Equation~\ref{eq:extrapolation_objective_local}). 
    
    

\end{proof}

\section{Synthetic Data Experiments} \label{app:synthetic}

\subsection{Implementation Details}
We employ a variational auto-encoder~\citep{kingma2013auto} whose encoder and decoder are both 4-layer MLP with 32 dimensions and leaky ReLu ($\alpha=0.2$).
Following Equation~\ref{eq:extrapolation_objective} and Equation~\ref{eq:extrapolation_objective_local}, we implement reconstruction loss, KL loss on the source distribution, the likelihood loss on the target sample, and a classification loss on the source data.
For the dense case, we implement an additional distance loss to minimize the $\ell_{2}$ distance of $\hstgt$ to the center of the source support (which is the origin in our case).
The source-only baseline is trained only with classification loss.
The iMSDA implementation is adopted directly from the source code of \citet{kong2022partial}.
We train all methods with Adam~\citep{kingma2014adam} and learning rate $2e-3$ for $25$ epochs.
We fix the loss weights $ \lambda_{\mathrm{cls}} =1 $, $ \lambda_{\mathrm{recons}} = 0.1 $, $ \lambda_{\mathrm{tgt\_likelihood}} = 0.1 $, and $ \lambda_{\mathrm{s\_distance}} = 0.01 $ (for dense shifts) overall distance configurations.
We only tune $ \lambda_{\mathrm{KL}} $ from the interval $ \{ 1e-1, 1e-2, 1e-3\} $.
We run synthetic data experiments on one Nvidia L40 GPU and each run consumes less than 2 minutes.

\subsection{Regression Task Evaluation} \label{app:regression}

In addition to the classification experiments, we evaluate our model on regression in this section.

\subsubsection{Implementation}
\paragraph{Data generation.} The regression target $y$ is generated from a uniform distribution $U(0,4)$. 
We sample 4 latent invariant variables $\cc$ from a normal distribution $N(y, \mI_c)$. 
Two changing variables in the source domain $\ss_{\mathrm{src}}$ are sampled from a truncated Gaussian centered at the origin. In the target domain, changing variables $\ss_{\mathrm{tgt}}$ are sampled at multiple distances (e.g., $\{18, 24, 36\}$) from the origin. 
For dense shifts, observations $\xx$ are generated by concatenating $\cc$ and $\ss$ and feeding them to a 4-layer MLP with ReLU activation.
For sparse shifts, only two out of six dimensions of $\xx$ are influenced by the changing variable $\ss$.
We generate 10k samples for training and 50 target samples for testing (one target sample accessed per run).

\paragraph{Model.} 
We make two modifications on the classification model in Section~\ref{sec:syn_exp}. 
First, we substitute the classification head with a regression head (the last linear layer). Second, we replace the cross-entropy loss with MSE loss. We fix the loss weights of MSE loss and KL loss at 0.1 and 0.01 for all settings, respectively, and keep all other hyper-parameters the same as in the classification task. We use MSE as the evaluation metric.

\subsubsection{Results and Analysis}
Table~\ref{tab:synthetic_regression} displays the evaluation results.
We can observe that our method consistently outperforms the baseline and maintains its performance over a wide range of shift distances. In contrast, the baseline that directly uses all the feature dimensions degrades drastically when the shift becomes severe. 
This indicates that our approach can indeed identify the invariant part of the latent representation, validating our theoretical results.

\begin{table}[t]
\centering
\setlength{\tabcolsep}{8pt} 
\renewcommand{\arraystretch}{1.2} 
\caption{
\textbf{Synthetic data results on regression} (MSE) under both dense and sparse shifts across various distances.
}
\vspace{0.2cm}
\begin{tabular}{l*{3}{S[table-format=2.2]}*{3}{S[table-format=2.2]}}
\toprule
\rowcolor{gray!15}
\textbf{Shifts} & \multicolumn{3}{c}{\textbf{Dense}} & \multicolumn{3}{c}{\textbf{Sparse}} \\
\cmidrule(lr){2-4} \cmidrule(lr){5-7}
\textbf{Distance} & {18} & {24} & {30} & {18} & {24} & {30} \\
\midrule
Only Source & 11.64 & 2.44 & 3.26 & 1.84 & 3.32 & 5.84 \\
\addlinespace
\textbf{Ours} & \textbf{1.40} & \textbf{1.60} & \textbf{1.68} & \textbf{1.15} & \textbf{1.48} & \textbf{1.60} \\
\bottomrule
\end{tabular}
\vspace{-0.3cm}
\label{tab:synthetic_regression}
\end{table}

\section{Real-world Data Experimental Details} \label{app:exp}

\subsection{Datasets} 
The datasets used in our experiments include CIFAR10-C, CIFAR100-C, ImageNet-C~\cite{hendrycks2019benchmarking}, and ImageNet100-C. CIFAR10-C and CIFAR100-C are extended versions of the CIFAR datasets~\citep{krizhevsky2009learning} designed to evaluate model robustness against visual corruptions, featuring 10 and 100 classes respectively, each with 50,000 clean training samples and 10,000 corrupted test samples. ImageNet-C, on the other hand, scales this concept up with 1,000 classes, providing 50,000 test samples of each of 15 corruption types.
ImageNet-100~\cite{tian2020contrastive} is a subset of ImageNet with 100 classes. In our experiments, we build ImageNet100-C by selecting 100 classes reported in \citet{tian2020contrastive} from ImageNet-C~\cite{hendrycks2019benchmarking} with 15 different types of corruption.


\subsection{Generative Adaptation with Entropy Minimization}
When applying entropy minimization in the MAE-TTT framework~\cite{gandelsman2022test}, we did not directly integrate the entropy-minimization loss. The self-supervised MAE training process relies on masked images, whereas entropy minimization requires classifying the entire image. To address this, we introduced additional training steps using unmasked images and applied the entropy-minimization loss during these steps.
Specifically, the training process for each test-time iteration is split into two stages:
1) Stage One: We follow the MAE-TTT approach by inputting masked images and training the model using reconstruction loss. In this stage, only the encoder is updated.
2) Stage Two: We input full images (32 in a batch) and optimize the model with the entropy minimization loss following SHOT~\cite{liang2020we}. In this stage, both the encoder and classifier are optimized.
The learning rates for both stages are set to be the same.
The experiments are conducted with the PyTorch 1.11.0 framework, CUDA 12.0 with 4 NVIDIA A100
GPUs.

\subsection{Sparsity Regularization}
Here, we provide the implementation details of our modification to add sparsity regularization. 
In the pre-train stage, we apply the ResNet50~\cite{he2016deep} as the backbone network and follow~\cite{liu2021ttt++,su2022revisiting} to pre-train it on the clean CIFAR10, CIFAR100, and ImageNet training sets, with joint contrastive and classification losses. 
In the test-time adaptation process, we adopt the sequential TTA protocol as outlined in TTAC~\cite{su2022revisiting} and TeSLA~\cite{tomar2023tesla}. This protocol prohibits the change of training objectives throughout the test phase. Moreover, all testing data be processed in a sequential manner (one-pass), ensuring each data point is passed through the adaptation process exactly once.

Our method is built upon TeSLA~\cite{tomar2023tesla}, and follows most of its hyperparameters. Thus, we only discuss the extra hyperparameters we involved, including the low-rank dimension \(r\), the ratio of learning rate factor for soft frozen \(ratio_{l} = \frac{\text{lr}_{lora}}{\text{lr}_{backbone}}\), and the ratio of sparsity loss \(ratio_{s}\). The details are shown in Table~\ref{tab:hyperparameters}.
It was observed that the constraints on minimal change need to be more stringent as the complexity of the data increases.
The experiments are conducted with the PyTorch 1.13.0 framework, CUDA 11.7 with an NVIDIA A100 GPU.


\begin{table}[]
    \centering
    \caption{\textbf{Hyperparameters for minimal change constraint in our experiments.}}
    \vspace{0.2cm}
    \begin{tabular}{lcccc}
    \toprule
      Dataset  & \(r\) & \(ratio_{l}\) & \(ratio_{s}\)\\
    \midrule
      ImageNet & 4 & 5 & \(1 \times 10^{-3}\) \\
      CIFAR100 & 16 & 2 & \(1 \times 10^{-1}\) \\
      CIFAR10 & 64 & 1 & \(1 \times 10^{-5}\) \\
    \bottomrule
    \end{tabular}
    \label{tab:hyperparameters}
\end{table}

\subsection{Recoverability of the Invariant Variable} \label{subsec:recoverability}
We assess the impact of the recoverability of the invariant variable on Test-Time Adaptation (TTA) methods. To do this, we compare the performance of TTA methods using supervised and unsupervised pre-trained models that have similar ImageNet classification accuracy. Our goal is to validate whether invariant variables learned from annotated labels can improve test-time adaptation. We assume that annotated labels can help learn better invariant variables $\cc$, which play an important role in solving extrapolation problems. 

For this detailed analysis, we employ ResNet50 models pre-trained in both supervised and self-supervised manners, such as MoCo~\citep{chen2020improved}. 
For the MoCo model, we apply the linear probe to fit the classifier. 
For a fair comparison, we select checkpoints from both pre-training methods that have similar ImageNet accuracies, approximately 69.7$\%$. We follow the open-source TTA Benchmark~\cite{yu2023benchmarking} to evaluate both models using different downstream TTA methods. ImageNet-C is used as the evaluation dataset, and all hyperparameters are set to their default values.

The performance of the different pre-trained models is summarized in Table~\ref{tab:comparison}. Methods using supervised pre-training outperform those using unsupervised pre-training by a significant margin, indicating that the invariant variables learned from annotated labels play a crucial role in enhancing test-time adaptation.

\begin{table}[t]
\centering
\caption{\textbf{ImageNet-C evaluation of TTA algorithms with supervised and moco pre-trains .} }
\vspace{0.4cm}
\label{tab:comparison}
\begin{tabular}{l|cccc}
\toprule
\textbf{TTA Algorithm} & TENT~\cite{wang2020tent} & EATA~\cite{niu2022efficient} & BN-adapt~\cite{yu2023benchmarking} & SAR~\cite{niu2022towards} \\ \midrule
Supervised pre-train  & 38.13 & 42.77 & 29.13 & 41.47 \\ 
Moco pre-train& 8.94 & 1.25 & 20.13 & 12.23\\ \bottomrule
\end{tabular}
\end{table}

\subsection{Additional quantitative results.}
In Table \ref{tab:cifar_c}, we provide the detailed performance with 
corruption-wise classification error rates on all CIFAR10-C,
CIFAR100-C, and ImageNet-C datasets. Specifically, we report results under seed $0$ on all 15 testing corruptions including Gaussian Noise, Shot Noise, Impulse Noise, Defocus Blur, Glass Blur, Motion Blur, Zoom Blur, Snow, Frost, Fog, Brightness, Contrast, Elastic Transformation, Pixelate, and JPEG Compression.

\begin{table*}
\centering
\caption{\textbf{Detailed corruption-wise results on CIFAR10-C, CIFAR100-C, and ImageNet-C}. We report the error rates (\%) on 15 testing corruptions. }
\resizebox{\linewidth}{!}{
\begin{tabular}{@{}l|l|ccccccccccccccc|c}
\toprule
     Dataset & Method & Gaus. & Shot & Impu. & Defo. & Glas. & Moti. & Zoom & Snow & Fros. & Fog & Brig. & Cont. & Elas. & Pixe. & Jpeg & Avg. \\

\midrule

\multirow{7}{*}{CIFAR10-C}  & BN & 18.2 & 17.2 & 28.1 & 9.8 & 26.6 & 14.2 & 8.0 & 15.5 & 13.8 & 20.2 & 7.9 & 8.3 & 19.3 & 13.3 & 13.8 & 15.6 \\

& Tent & 16.0 & 14.8 & 24.5 & 9.2 & 23.8 & 13.1 & 7.7 & 14.9 & 13.0 & 16.5 & 8.2 & 8.3 & 17.9 & 10.9 & 13.3 & 14.1 \\

& SHOT & 16.5 & 15.3 & 23.6 & 9.0 & 23.4 & 12.7 & 7.5 & 14.0 & 12.4 & 16.1 & 7.5 & 8.0 & 17.4 & 12.5 & 13.1 & 13.9 \\

 & TTT++ & 18.0 & 17.1 & 30.8 & 10.4 & 29.9 & 13.0 & 9.9 & 14.8 & 14.1 & 15.8 & 7.0 & 7.8 & 19.3 & 12.7 & 16.4 & 15.8 \\

 & TTAC & 17.9 & 15.8 & 22.5 & 8.5 & 23.5 & 11.2 & 7.6 & 11.9 & 12.9 & 13.3 & 6.9 & 7.6 & 17.3 & 12.3 & 12.6 & 13.4 \\

 & TeSLA & 13.3 & 12.5 & 20.8 & 8.8 & 21.1 & 11.8 & 7.3 & 12.6 & 11.2 & 15.6 & 7.6 & 7.6 & 16.2 & 9.7 & 11.6 & 12.5 \\

    & TeSLA+MC & 13.0 & 12.6 & 19.8 & 8.8 & 19.9 & 10.9 & 7.5 & 12.2 & 11.0 & 14.4 & 7.2 & 7.4 & 15.4 & 9.2 & 10.9 & 12.0 \\

\midrule

\multirow{7}{*}{CIFAR100-C} & BN & 48.2 & 46.4 & 61.1 & 33.8 & 58.2 & 41.4 & 31.9 & 46.1 & 42.5 & 54.7 & 31.3 & 33.3 & 48.4 & 39.0 & 39.6 & 43.7 \\

 & Tent & 43.3 & 41.2 & 52.7 & 31.2 & 50.8 & 36.1 & 29.3 & 41.9 & 38.9 & 43.6 & 30.1 & 31.0 & 43.5 & 34.4 & 36.5 & 39.0 \\

& SHOT & 44.1 & 41.8 & 53.3 & 31.5 & 50.6 & 36.0 & 29.6 & 40.7 & 40.1 & 41.9 & 29.5 & 33.6 & 44.0 & 34.9 & 36.6 & 39.2 \\
 & TTT++ & 50.2 & 47.7 & 66.1 & 35.8 & 61.0 & 38.7 & 35.0 & 44.6 & 43.8 & 48.6 & 28.8 & 30.8 & 49.9 & 39.2 & 45.5 & 44.4 \\

& TTAC & 47.7 & 45.7 & 58.1 & 32.5 & 55.3 & 36.6 & 31.2 & 40.3 & 40.8 & \textbf{44.7} & 30.0 & 39.9 & 47.1 & 37.8 & 38.3 & 41.7 \\
 & TeSLA & 40.0 & 38.9 & 51.5 & 32.2 & 49.1 & 36.9 & 29.7 & 40.4 & 37.4 & 46.0 & 29.3 & 30.7 & 42.7 & 32.9 & 34.6 & 38.2 \\

     & TeSLA+MC & 39.3 & 38.4 & 50.5 & 31.8 & 48.7 & 36.4 & 29.9 & 39.9 & 37.4 & 46.6 & 28.4 & 30.5 & 43.0 & 32.1 & 34.5 & 37.8 \\

\midrule

\multirow{7}{*}{ImageNet-C} & BN & 83.5 & 82.6 & 82.9 & 84.4 & 84.2 & 73.1 & 60.5 & 65.1 & 66.3 & 51.5 & 34.0 & 82.6 & 55.3 & 50.3 & 58.7 & 67.7 \\

& Tent & 70.8 & 68.7 & 69.1 & 72.5 & 73.3 & 59.3 & 50.8 & 53.0 & 59.1 & 42.7 & 32.6 & 74.5 & 45.5 & 41.6 & 47.8 & 57.4 \\

 & SHOT & 77.0 & 74.6 & 76.4 & 81.2 & 79.3 & 72.5 & 61.7 & 65.7 & 66.3 & 55.6 & 56.0 & 92.7 & 57.1 & 56.3 & 58.2 & 68.7 \\

 & TTAC & 71.5 & 67.7 & 70.3 & 81.2 & 77.3 & 64.0 & 54.4 & 51.1 & 56.9 & 45.4 & 32.6 & 79.1 & 46.0 & 43.7 & 48.6 & 59.3 \\
 
& TTT++ & 69.4 & 66.0 & 69.7 & 84.2 & 81.7 & 65.2 & 53.2 & 49.3 & 56.2 & 44.4 & 32.8 & 75.7 & 43.9 & 41.6 & 46.9 & 58.7 \\

 & TeSLA & 65.0 & 62.9 & 63.5 & 69.4 & 69.2 & 55.4 & 49.5 & 49.1 & 56.6 & 41.8 & 33.7 & 77.9 & 43.3 & 40.4 & 46.6 & 55.0 \\

     & TeSLA+MC & 64.8 & 62.7 & 63.7 & 69.7 & 69.5 & 55.1 & 48.8 & 48.6 & 55.7 & 41.3 & 32.7 & 75.8 & 42.7 & 39.7 & 46.0 & 54.4 \\

\bottomrule
\end{tabular}}
\label{tab:cifar_c}
\end{table*}


\clearpage

\end{document}